\documentclass{article}

\usepackage[preprint]{neurips_2025}

\usepackage[utf8]{inputenc} %
\usepackage[T1]{fontenc}    %
\usepackage{hyperref}       %
\usepackage{url}            %
\usepackage{booktabs}       %
\usepackage{amsfonts}       %
\usepackage{nicefrac}       %
\usepackage{microtype}      %
\usepackage{xcolor}         %
\usepackage{enumitem}

\usepackage{hyperref}

\usepackage{amsmath}
\usepackage{amssymb}
\usepackage{mathtools}
\usepackage{amsthm}

\usepackage[capitalize,noabbrev]{cleveref}

\theoremstyle{plain}
\newtheorem{theorem}{Theorem}[section]
\newtheorem{proposition}[theorem]{Proposition}
\newtheorem{lemma}[theorem]{Lemma}
\newtheorem{corollary}[theorem]{Corollary}
\theoremstyle{definition}

\theoremstyle{remark}
\newtheorem{remark}[theorem]{Remark}
\usepackage{soul}

\usepackage[textsize=tiny]{todonotes}
\usepackage{algorithm}
\usepackage{algorithmic}
\usepackage{soul}
\usepackage{wrapfig}
\usepackage{xspace}

\title{Bounded Ratio Reinforcement Learning}

\author{
  Yunke Ao \thanks{Correspondence to: \texttt{yunke.ao@ai.ethz.ch}}\\
  ETH Zurich\\
  \And
  Le Chen \thanks{Equal second author contribution}\\
  MPI for Intelligent Systems \\
  \And
  Bruce D. Lee \footnotemark[2] \\
  ETH Zurich\\
  \And
  Assefa S. Wahd \thanks{Equal third author contribution}\\
  University of Alberta\\
  \And
  Aline Czarnobai \footnotemark[3] \\
  Dartmouth College \\
  \And 
  Philipp Fürnstahl \\
  Balgrist University Hospital \\
  \And
  Bernhard Schölkopf \\
  MPI for Intelligent Systems \\
  \And 
  Andreas Krause \\
  ETH Zurich \\
}

\begin{document}

\newcommand{\bruce}[1]{{\color{blue}{BL: #1}}}
\newcommand{\yunke}[1]{{\color{red}{YA: #1}}}
\maketitle

\begin{abstract}
    \looseness -1 Proximal Policy Optimization (PPO) has become the predominant algorithm for on-policy reinforcement learning due to its scalability and empirical robustness across domains. However, there is a significant disconnect between the underlying foundations of trust region methods and the heuristic clipped objective used in PPO. 
    In this paper, we bridge this gap by introducing the {\em Bounded Ratio Reinforcement Learning (BRRL)} framework.
    We formulate a novel regularized and constrained policy optimization problem and derive its analytical optimal solution. We prove that this solution ensures monotonic performance improvement. To handle parameterized policy classes, we develop a policy optimization algorithm called {\em Bounded Policy Optimization (BPO)} that minimizes an advantage-weighted divergence between the policy and the analytic optimal solution from BRRL.  We further establish a lower bound on the expected performance of the resulting policy in terms of the BPO loss function. Notably, our framework also provides a new theoretical lens to interpret the success of the PPO loss, and connects trust region policy optimization and the Cross-Entropy Method (CEM). We additionally extend BPO to {\em Group-relative BPO (GBPO)} for LLM fine-tuning. Empirical evaluations of BPO across MuJoCo, Atari, and complex IsaacLab environments (e.g., Humanoid locomotion), and of GBPO for LLM fine-tuning tasks, demonstrate that BPO and GBPO generally match or outperform PPO and GRPO in stability and final performance.
\end{abstract}

\begin{figure*}[!hbtp] 
    
    \centering %
    
    \includegraphics[width=1.0\textwidth]{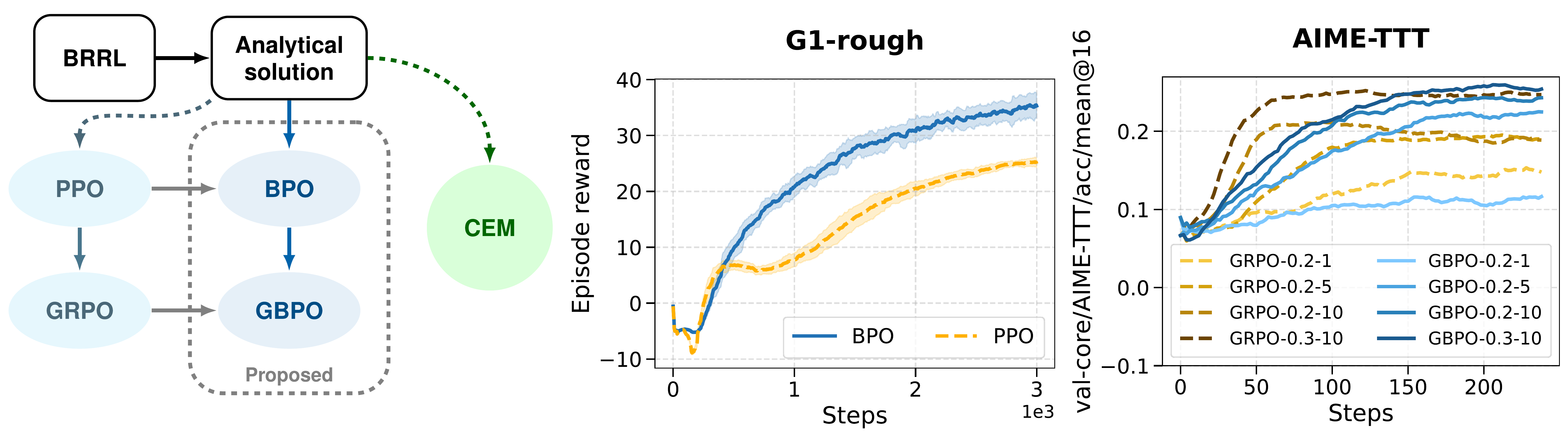} 
    
    \caption{
    The Bounded Ratio Reinforcement Learning (BRRL) framework introduces the surrogate policy optimization problem under bounded ratio constraints. Its analytical solution closely relates to the PPO objective function, cross-entropy methods (CEM), and suggests a theoretically grounded policy optimization algorithm with minor changes to PPO: Bounded Policy Optimization (BPO). We observe marked improvements of BPO and its variant Group-relative BPO (GBPO) in the performance and stability for humanoid locomotion and LLM fine-tuning (mathematical reasoning).
    }
    \label{fig:title}
\end{figure*}

\section{Introduction}

Deep reinforcement learning (DRL) has achieved breakthroughs across diverse domains
~\cite{silver2017mastering,lee2020learning,ouyang2022training,radosavovic2024real}. 
Among DRL methods, Proximal Policy Optimization (PPO)~\cite{schulman2017proximal} remains one of the most widely adopted algorithms.
The core design of PPO is motivated by Trust Region Policy Optimization (TRPO,~\cite{schulman2015trust}), which constrains policy updates within a ``trust region'' to ensure stable iterations. 
By utilizing a first-order approximation of the TRPO objective, PPO achieves the scalability necessary for training modern large-scale models. 
As a result, PPO and its variant GRPO are now widely applied to tasks ranging from robotics to large language model (LLM) fine-tuning~\cite{miki2022learning,andrychowicz2020learning,shao2024deepseekmath}.

\looseness -1 Despite its empirical success, PPO remains largely heuristic: its clipped objective is not directly derived from the trust-region formulation it was intended to approximate. Instead, the design of the PPO objective was primarily driven by experimentation~\cite{schulman2017proximal,engstrom2020implementation}. 
Furthermore, most existing theoretical analyses of PPO's performance improvement rely on the original TRPO or policy gradient formulation~\cite{schulman2015trust,liu2019neural,doering2026approximate}, none of which fully capture the nuances of the first-order loss used in practice. 

Numerous variants have been recently proposed to improve PPO. 
Some works focus primarily on algorithm design and report empirical performance gains without formal theoretical contributions~\cite{cobbe2021phasic,ye2020mastering,tan2024beyond,fakoor2020p3o,kobayashi2021proximal}.
Other works extend PPO to specific domains (e.g., safe RL, non-stationary RL) without modifying the core PPO loss function~\cite{akgul2025overcoming,milosevic2025central}.
There are also PPO variants aiming at improving the PPO loss from a theoretical lens~\cite{xie2024simple,wang2020truly,wang2019trust,qi2026rethinking}.
However, similar to PPO, they also utilize TRPO theory without introducing novel theoretical frameworks or establishing superior performance guarantees.
Consequently, there remains a substantial gap between the theoretical foundations and the practical policy optimization algorithms.

To address this gap, we introduce the 
 bounded ratio reinforcement learning (BRRL) framework. Instead of constraining policy updates through KL divergence~\cite{kullback1951information} bounds as in TRPO, BRRL imposes bounded ratio constraints on the policy likelihood ratios.
 This formulation admits an analytic optimal policy, which reveals a simple structure for policy updates. %
 We establish the following contributions using the BRRL framework:
 \begin{itemize}[noitemsep, nolistsep, leftmargin=*]
     \item We derive the optimal solution of BRRL and prove its monotonic performance improvement guarantees. We also demonstrate that optimizing the PPO loss approximately pushes the policy towards this analytic optimal solution. 
     \item We establish a connection between BRRL and the Cross-Entropy Method (CEM).
     \item We propose Bounded Policy Optimization (BPO), which optimizes an advantage-weighted divergence from the BRRL solution. We also extend BPO to Group-Relative BPO (GBPO), mirroring the extension from PPO to GRPO.
     \item We provide a performance improvement guarantee for the policy attained by BPO in terms of the loss that BPO optimizes.
     \item We demonstrate strong empirical performance of BPO on MuJoCo, Atari, IsaacLab locomotion tasks, and of GBPO for LLM fine-tuning.
 \end{itemize}
 Overall, BRRL provides a principled perspective on PPO-style algorithms, suggesting that their empirical success arises from approximating an analytically optimal bounded-ratio update. By more directly approximating this analytically optimal bounded-ratio update, BPO achieves improved empirical performance (\Cref{fig:title}).

\section{Notation}
\label{sec:mdp}

\textbf{Markov Decision Process (MDP):} We consider an infinite-horizon MDP defined by the tuple $(\mathcal{S},\mathcal{A},\mathcal{P},r,d, \gamma)$, where $\mathcal{S}$ is the state space, $\mathcal{A}$ is the action space, $\mathcal{P}:\mathcal{S}\times\mathcal{A}\times \mathcal{S}\rightarrow \mathbb{R}$ is the transition model, $r:\mathcal{S}\times\mathcal{A}\times \mathcal{S}\rightarrow \mathbb{R}$ is the reward function, $d_0: \mathcal{S}\rightarrow\mathbb{R}$ is the initial state distribution,
 and $\gamma\in(0, 1)$ is the discount factor. 
Let $\pi:\mathcal{\mathcal{S}}\times\mathcal{A}\rightarrow\mathbb{R}$ denote the stochastic policy. 
We denote $r_t := r(s_t, a_t, s_{t+1})$.
The goal of the MDP is to solve the optimization problem
\begin{equation}
\begin{split}
    \max_{\pi\in \Pi} \eta(\pi):=\mathbb{E}_{s_{0:\infty},a_{0:\infty}}\left[\sum_{t=0}^{\infty}\gamma^t r_t\right], \,\,s_0\sim d_0,\,a_t\sim \pi(a_t|s_t), s_{t+1}\sim p(s_{t+1}|s_t,a_t),
\end{split}\label{eq:org_opt}
\end{equation}
which maximizes the expected discounted return under policy $\pi$ within the policy class $\Pi$.
Let us denote $d_\pi(s):=\sum_{t=0}^\infty \gamma^t P(s_t=s)$ as an unnormalized state visitation distribution~\cite{schulman2015trust}. 
Then the objective in~\eqref{eq:org_opt} can be rewritten as $\eta(\pi)=\mathbb{E}_{s\sim d_\pi, a\sim \pi(\cdot|s),s'\sim P(\cdot|s, a)}[r(s, a, s')]$.

\looseness -1 \textbf{Value function and advantage:} We define the value function of a state $s$ given policy $\pi$ as $V_\pi(s):=\mathbb{E}_{s_0,a_0,...|s_0=s}[\sum_{t=0}^{\infty}\gamma^t r_t]$, and the Q-function $Q_\pi(s, a):=\mathbb{E}_{s_0,a_0,...|s_0=s, a_0=a}[\sum_{t=0}^{\infty}\gamma^t r_t]$, where the actions (excluding the conditioned $a_0$ in the Q-function) are sampled from the policy $\pi$. 
The advantage function is defined as the difference between them $A_\pi(s,a):=Q_\pi(s,a)-V_\pi(s)$.
As shown in~\cite{schulman2015trust}, the expected return of the new policy $\pi$ in~\eqref{eq:org_opt} with regard to an old policy $\pi_0$ can be derived as 
\begin{equation}
    \eta(\pi)=\eta(\pi_0) + \mathbb{E}_{s\sim d_\pi, a\sim \pi(\cdot|s)}[A_{\pi_0}(s, a)]. \label{eq:eta_adv}
\end{equation}
where the advantage is evaluated under $\pi_0$, and the expectation is taken over $d_\pi$ and $\pi$.

\textbf{Surrogate objectives:} We denote the surrogate objective optimized by TRPO~\cite{schulman2015trust} as
\begin{equation}
    L_{\pi_0}(\pi):=\eta(\pi_0) + \mathbb{E}_{s\sim d_{\pi_0}, a\sim \pi(\cdot|s)}[A_{\pi_0}(s, a)] = \eta(\pi_0) + \mathbb{E}_{s\sim d_{\pi_0}, a\sim \pi_0(\cdot|s)}[\rho A_{\pi_0}(s, a)] ,\label{eq:l}
\end{equation}
where $\rho=\rho(a| s):=\pi(a|s)/\pi_0(a|s)$ are the importance weights.
In contrast to~\eqref{eq:eta_adv}, $L_{\pi_0}(\pi)$ takes the expectation over $d_{\pi_0}$ instead of $d_\pi$.
In TRPO, $\pi$ is updated to optimize $L_{\pi_0}(\pi)$ with constrained KL-divergence from $\pi_0$.

\section{Overview of Contributions}
\begin{figure*}[t] 
    
    \centering %
    
    \includegraphics[width=0.9\textwidth]{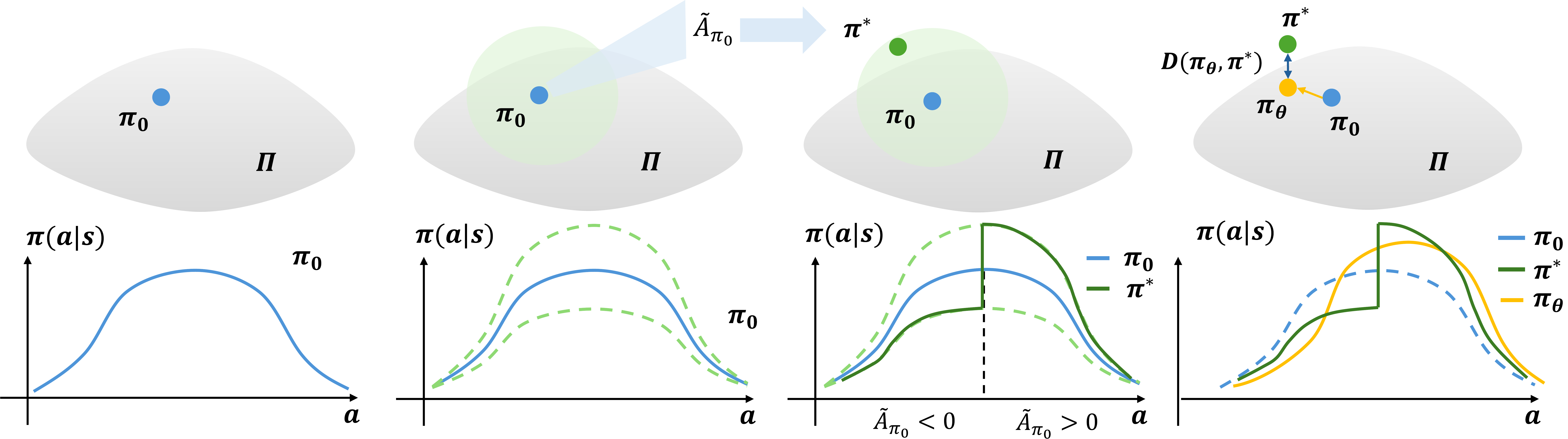} 
    
    \caption{Illustration of Bounded Ratio RL (BRRL). (Left) Old policy $\pi_0$ within the parameterized policy class $\Pi$. 
    (Middle Left) Construction of a trust region (light green) defined by bounded ratio constraints from Problem~\eqref{prob:bounded_ratio} or~\eqref{eq:conservative}. 
    (Middle Right) Estimation of the analytic optimal policy within the trust region (dark green, can be outside $\Pi$) using (soft-)median-advantages $\Tilde{A}_{\pi_0}$ (Theorem~\ref{thr:conservative}).
    In general, actions with positive (resp.~negative) advantages $\Tilde{A}_{\pi_0}>0$ (resp. $\Tilde{A}_{\pi_0}<0$) yield optimal ratios greater (resp. smaller) than 1.
    (Right) The updated policy within $\Pi$ (yellow) is obtained by minimizing a divergence from the estimated optimal policy.
    }
    \label{fig:main}
\end{figure*}

\label{sec:overview}
In this section, we present an overview of the contributions within this work, as shown in Figure~\ref{fig:main}. %

\textbf{Bounded ratio RL framework: }
\looseness -1 We consider a policy optimization problem with bounded ratio trust region constraints from an old policy $\pi_0$, instead of the KL-divergence constraint of TRPO, as shown in Figure~\ref{fig:main} (Middle Left).
Specifically, with $L_{\pi_0}(\pi)$ defined in~\eqref{eq:l}, the problem is expressed as
\begin{equation}
    \max_{\pi} L_{\pi_0}(\pi),\quad 
    \text{s.t.}\,\, 1-\epsilon \leq \frac{\pi(a|s)}{\pi_0(a|s)}\leq 1+\epsilon, \,\,\forall\,s,a.,~\label{prob:bounded_ratio}
\end{equation}
which also has an implicit normalization constraint $\sum_{a} \pi(a|s)=1,\forall s$.
Notably, this problem has an \emph{analytical} optimal solution $\pi^*$, which in many cases (as detailed in Remark~\ref{rmk:unreg}) can be derived as 
\begin{equation}
    \pi^*(a|s) = [1+\epsilon\cdot\text{sign}(\tilde{A}_{\pi_0})]\cdot\pi_0(a|s),
    \label{eq:simple_solution}
\end{equation}
\looseness -1 where $\tilde{A}_{\pi_0} := Q_{\pi_0}(s,a) - \mu_{\pi_0}(s)$ is the median advantage. In particular, $\mu_{\pi_0}(s)$ denotes the median of $Q_{\pi_0}(s,a)$ over $\pi_0$, such that for any $s \in \mathcal S$, $\tilde A_{\pi_0}$ satisfies $\mathbb{E}_{a\sim\pi_0(\cdot|s)}[\text{sign}(\tilde{A}_{\pi_0})]=0$. 
As shown in Figure~\ref{fig:main} (Middle Right) and Figure~\ref{fig:bpo_ppo} (c), this optimal solution can be explained as: if $Q_{\pi_0}(s, a)$ is higher than the threshold $\mu_{\pi_0}(s)$, then take the highest probability within the constraint $\pi^*(a|s)=(1+\epsilon)\pi_0(a|s)$; otherwise, let $\pi^*(a|s)=(1-\epsilon)\pi_0(a|s)$.
Threshold $\mu_{\pi_0}(s)$ is selected as the median, s.t. $\pi^*$ is a normalized probability distribution ($\sum_a\pi^*(a|s)=1$).
A formal theorem on the optimal solution for general cases is provided in Theorem~\ref{thr:conservative}.
Note that Theorem~\ref{thr:conservative} can also be extended to problems with asymmetric bounded ratio constraints ($c_l\leq \pi(a|s)/\pi_0(a|s) \leq c_h$). This asymmetric solution is used to draw a connection to the cross-entropy method (CEM,~\cite{rubinstein1999cross}) in Section~\ref{sec:cem}. %

\begin{figure*}[t] 
    
    \centering %
    
    \includegraphics[width=1.0\textwidth]{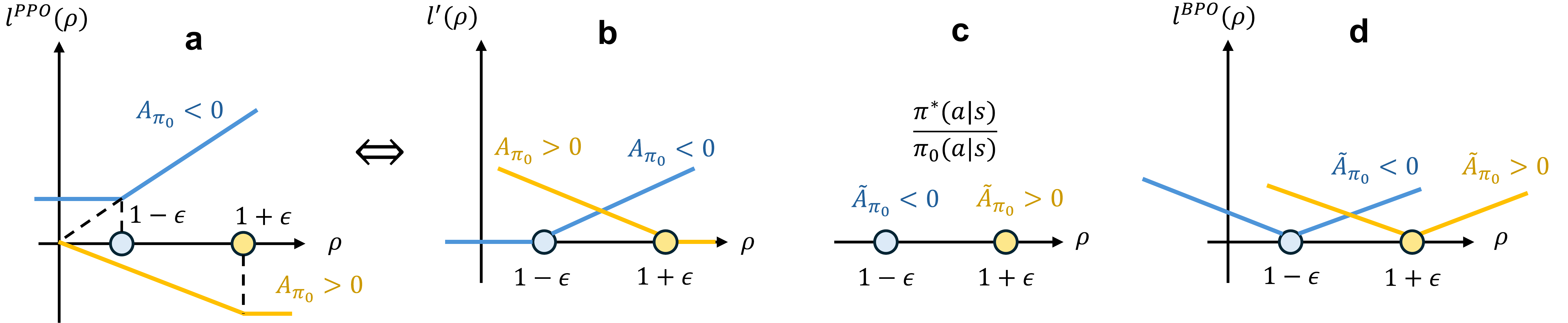} 
    
    \caption{
    Loss functions of PPO and bounded-ratio RL.
    Curves for $\Tilde{A}_{\pi_0}>0$ and $\Tilde{A}_{\pi_0}<0$ are shown in yellow and blue, respectively.
    (a) Original PPO loss function. (b) Equivalent loss function of PPO as introduced in Equation~\eqref{eq:ppo_sim_l}. (c) Optimal ratios for the optimization problem with bounded ratio constraints in~\eqref{prob:bounded_ratio}. (d) Advantage-weighted TV loss function in BPO, defined in~\eqref{eq:simple_loss}.}
    \label{fig:bpo_ppo}
\end{figure*}

\textbf{Monotonic performance guarantee:} For the cases where optimal policy $\pi^*$ from~\eqref{eq:simple_solution} is realizable, it can be shown to have improved performance over $\pi_0$
\begin{equation}
    \eta(\pi^*) = \eta(\pi_0) + \epsilon\,\mathbb{E}_{s\sim d_{\pi^*},a\sim \pi_0}[\text{sign}(\tilde{A}_{\pi_0})\tilde{A}_{\pi_0}]:=\eta(\pi_0) + \epsilon B, \label{eq:simple_thr}
\end{equation}
\looseness -1 where the second term is non-negative and is positive whenever $\pi_0$ induces non-zero median advantage.
For a fixed $\pi_0$, we denote this constant improvement term as $\epsilon B$.
A performance bound for general cases is provided in Theorem~\ref{thr:soft_guarantee}.
Though $\pi^*$ is simple to express and provides improvement guarantees, it may not lie in the admissible policy class $\Pi$ (Figure~\ref{fig:main} Middle right).
This motivates the design of policy optimization algorithms 
to minimize divergence between the policy $\pi\in\Pi$ and $\pi^*$.

\textbf{Revisiting the PPO loss function: }
We observe that the PPO loss function approximately drives the policy towards $\pi^*$ in~\eqref{eq:simple_solution}.
Specifically, as shown in Figure~\ref{fig:bpo_ppo} (a-b), %
optimizing the PPO objective~\cite{schulman2017proximal} is equivalent to minimizing the expectation of the following loss function evaluated at $\rho = \pi(a|s)/\pi_0(a|s)$
\begin{equation}
\begin{split}
        &l'(\rho ):=\begin{cases}|A_{\pi_0}|\cdot |\rho - (1+\epsilon\cdot\text{sign}(A_{\pi_0}) )|, & [\rho - (1+\epsilon\cdot\text{sign}(A_{\pi_0}))]\cdot A_{\pi_0} \leq 0, \\[2pt]
0, & \text{Otherwise}. 
\end{cases}
\end{split}\label{eq:ppo_sim_l}
\end{equation}
A formal theorem on this equivalence with step-by-step proof is detailed in Section~\ref{sec:revisitPPO} and Appendix~\ref{app:proof_ppo_equ}. At the beginning of the iteration, the ratio always starts from 1, and the PPO loss minimizes an \emph{advantage-weighted absolute error} between the ratio $\rho$ and the target $1 + \epsilon\text{sign}(A_{\pi_0})$, then it applies zero-gradient after reaching the target.
Note that this target ratio closely matches the solution in~\eqref{eq:simple_solution}, except that PPO uses the mean advantage $A_{\pi_0}$, and the BRRL solution is expressed in terms of the median advantage $\tilde{A}_{\pi_0}$.

\textbf{Bounded Policy Optimization (BPO): } 
Building on the solution in~\eqref{eq:simple_solution}, we introduce a natural PPO variant with the loss function $l^{BPO}$ to directly minimize the \emph{advantage weighted total variation} from the optimal solution.
For the solution in~\eqref{eq:simple_solution}, the loss $l^{BPO}$ evaluated under $\rho = \pi(a|s)/\pi_0(a|s)$ is
\begin{equation}
    l^{BPO}(\rho ):=|A_{\pi_0}|\cdot \left|\rho - \frac{\pi^*(a|s)}{\pi_0(a|s)}\right|=|A_{\pi_0}|\cdot |\rho - (1+ \epsilon\cdot\text{sign}(\tilde{A}_{\pi_0}))|.\label{eq:simple_loss}
\end{equation}
\looseness -1 The loss is illustrated in Figure~\ref{fig:bpo_ppo} (d).
Compared with the PPO loss in~\eqref {eq:ppo_sim_l}, this loss function $l^{BPO}$ only differs in two ways:  (1) a symmetric slope also for $[\rho - (1+\epsilon\cdot\text{sign}(A_{\pi_0}))]\cdot A_{\pi_0} > 0$ and (2) using $\tilde{A}_{\pi_0}$ instead of $A_{\pi_0}$.
In practice, this also requires learning an additional median value function alongside the mean value function, though the median can be approximated by the mean to reduce computational overhead. %
Notably, with this refined loss function~\eqref{eq:simple_loss}, BPO has both \emph{theoretical performance guarantees} (discussed below) and strong empirical performance, as demonstrated in Section~\ref{sec:exp}.
The same loss function can also be adapted for LLM fine-tuning, analogous to how PPO was adapted to GRPO (Section~\ref{sec:gbpo}).

\textbf{BPO performance guarantees: }
Assuming the optimal solution in~\eqref{eq:simple_solution} is valid, we can express the stepwise improvement in terms of the achieved loss~\eqref{eq:simple_loss}. %
Specifically, we show that
\begin{align*}
    \eta(\pi) \geq \eta(\pi_0) + \epsilon B - \mathbb{E}_{
        s\sim d_{\pi_0},
        a\sim \pi_0}\left[l^{BPO}\left(\frac{\pi(a|s)}{\pi_0(a|s)}\right)\right] - \delta(\pi,\pi^*),
\end{align*}
where $B$ is defined in~\eqref{eq:simple_thr}. Here, $\delta(\pi,\pi^*)$ is an error term that is related to $l^{BPO}(\frac{\pi(a|s)}{\pi_0(a|s)})$ and reduces to $0$ if we have perfect policy approximation $\pi=\pi^*$.
This theoretical result directly implies that, if our loss function $l^{BPO}$ is sufficiently minimized over states and actions sampled from $\pi_0$, and if the policy approximation error is small, we can obtain monotonic performance improvement.
The formal result is detailed in Corollary~\ref{coro:loss_tv}.

\section{Method}

 We now proceed to present the aforementioned framework of BRRL and its extensions.

\subsection{Bounded Ratio RL Framework}
\label{sec:optimal solution}
Intuitively, for an MDP with finite state and action spaces, Problem~(\ref{prob:bounded_ratio}) is a linear programming problem.
Specifically, for a fixed state $s$, the optimization variable $\pi(a|s)$ is a finite-dimensional vector.
Consequently, the objective function and constraints in Problem~\eqref{prob:bounded_ratio} are linear in $\pi(a|s)$.
However, for general state and action spaces, the optimal solution of this linear programming problem is difficult to specify analytically.
Nevertheless, an additional \emph{regularizer} allows for the derivation of the general analytical solution.
Namely, we consider the following regularized constrained optimization problem:
\begin{equation}
\begin{split}
    &\max_{\pi}  \, L_{\pi_0}(\pi) - \lambda \mathbb{E}_{s\sim d_{\pi_0},a\sim \pi_0}\left[H\left(\frac{\pi(a|s)}{\pi_0(a|s)}\right)\right], \\
    \text{where} \quad& H(\rho):=(\rho - 1 + \epsilon) \log(\rho - 1 + \epsilon)  + (1 + \epsilon - \rho)\log(1 + \epsilon - \rho). \label{eq:conservative}
\end{split}
\end{equation}
\looseness -1 Here the regularizer $H(\rho)\in[2\epsilon\log\epsilon, 2\epsilon\log(2\epsilon)]$  decreases as $\rho \rightarrow 1$ and increases as $\rho \rightarrow 1\pm\epsilon$.
Moreover, its gradient becomes unbounded near the boundaries $1\pm\epsilon$, so $H$ provides log barriers for the original bounded ratio constraints $1-\epsilon < \frac{\pi(a|s)}{\pi_0(a|s)} < 1+\epsilon$.
The regularizer is weighted by $\lambda$.
According to Fermi-Dirac statistics~\cite{landau1980statistical}, Problem~\eqref{eq:conservative} has a closed-form solution, detailed in the following theorem.
\begin{theorem}[Optimal solution]
    The optimal policy $\pi^*$ of the problem described in~\eqref{eq:conservative} satisfies:
    \begin{align*}
       \pi^*(a|s) =\left(1+\epsilon \tanh \left(\frac{\tilde{A}_{\pi_0}}{2\lambda}\right)\right)\pi_0(a|s),\,\, \tilde{A}_{\pi_0}:=Q_{\pi_0}(s, a) - \mu_{\pi_0}(s),
        \end{align*}
        where $\mu_{\pi_0}(s)$ is called the soft-median of $Q_{\pi_0}(s,a)$ that satisfies
        \begin{align*}
     \mathbb{E}_{a\sim\pi_0(\cdot|s)}\left[\tanh \left(\frac{\tilde{A}_{\pi_0}}{2\lambda}\right)\right] = 0 \,\, \Leftrightarrow \,\, \mu_{\pi_0}(s)=\arg\min_{\mu(s)} \mathbb{E}_{a\sim\pi_0(\cdot|s)}\left[ g\left(\frac{Q_{\pi_0}(s, a) - \mu(s)}{\lambda}\right)\right],
    \end{align*} where $g:\mathbb{R}\rightarrow \mathbb{R}_{\geq 0},g(x)=\ln(e^{-\frac{x}{2}}+e^{\frac{x}{2}})$ is a soft absolute function. \label{thr:conservative}
\end{theorem}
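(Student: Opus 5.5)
The plan is to reduce Problem~\eqref{eq:conservative} to independent per-state problems and solve each by a Lagrangian argument. Write $L_{\pi_0}(\pi)$ via~\eqref{eq:l} as $\eta(\pi_0)+\mathbb{E}_{s\sim d_{\pi_0}}\mathbb{E}_{a\sim\pi_0}[\rho A_{\pi_0}(s,a)]$. The whole objective is then $\mathbb{E}_{s\sim d_{\pi_0}}$ of a state-wise functional of $\rho(\cdot|s)$. The only coupling constraint is normalization, $\mathbb{E}_{a\sim\pi_0(\cdot|s)}[\rho(a|s)]=1$, and it acts separately for each $s$. So it suffices to maximize, for each fixed $s$,
\begin{equation*}
J_s(\rho)=\mathbb{E}_{a\sim\pi_0}\big[\rho\,A_{\pi_0}(s,a)-\lambda H(\rho)\big]\quad\text{s.t.}\quad \mathbb{E}_{a\sim\pi_0}[\rho]=1,\ \rho\in[1-\epsilon,1+\epsilon].
\end{equation*}
Since $H''(\rho)=\frac{1}{\rho-1+\epsilon}+\frac{1}{1+\epsilon-\rho}>0$, $J_s$ is strictly concave. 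The feasible set is convex. Hence any stationary point of the Lagrangian that lies in the interior is the unique maximizer.

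\textbf{Step 1: stationarity.} Introduce a multiplier $\nu(s)$ for normalization and differentiate pointwise in $\rho(a|s)$:
\begin{equation*}
A_{\pi_0}(s,a)-\lambda\log\frac{\rho-1+\epsilon}{1+\epsilon-\rho}-\nu(s)=0 .
\end{equation*}
Absorb $V_{\pi_0}(s)$ into the multiplier by setting $\mu(s)=\nu(s)+V_{\pi_0}(s)$, so the left-hand terms become $Q_{\pi_0}(s,a)-\mu(s)=\tilde A_{\pi_0}$. Solving the logistic equation $\frac{\rho-1+\epsilon}{1+\epsilon-\rho}=e^{\tilde A/\lambda}$ gives
\begin{equation*}
\rho=1-\epsilon+\frac{2\epsilon}{1+e^{-\tilde A/\lambda}}=1+\epsilon\tanh\!\Big(\frac{\tilde A}{2\lambda}\Big).
\end{equation*}
This ratio lies strictly inside $(1-\epsilon,1+\epsilon)$, so the box constraints are inactive. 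This matches the Fermi--Dirac form the paper mentions.

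\textbf{Step 2: fixing $\mu$.} Impose normalization on the solution from Step~1; this gives exactly $\mathbb{E}_{a\sim\pi_0}[\tanh(\tilde A/(2\lambda))]=0$. Existence and uniqueness of such a $\mu(s)$ follow from the map $\mu\mapsto\mathbb{E}[\tanh((Q-\mu)/(2\lambda))]$. It is continuous and strictly decreasing, and it tends to $\pm1$ as $\mu\to\mp\infty$; continuity and the limits follow by dominated convergence, since $|\tanh|\le1$.

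For the equivalent $\arg\min$ characterization, note that $g(x)=\ln(2\cosh(x/2))$ satisfies $g'(x)=\tfrac12\tanh(x/2)$ and is strictly convex. Then
\begin{equation*}
\frac{d}{d\mu}\,\mathbb{E}\big[g((Q-\mu)/\lambda)\big]=-\frac{1}{2\lambda}\,\mathbb{E}\big[\tanh((Q-\mu)/(2\lambda))\big].
\end{equation*}
The first-order condition of the strictly convex minimization is therefore precisely the normalization equation.

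\textbf{Main obstacle.} The delicate step is making the pointwise Lagrangian argument rigorous for general (continuous) state and action spaces. Two points need care: interchanging differentiation with the expectation, and justifying that stationarity implies global optimality. I would sidestep both with a direct verification. Show that for any feasible $\rho$,
\begin{equation*}
J_s(\rho)-J_s(\rho^*)=\mathbb{E}_{a\sim\pi_0}\big[(\rho-\rho^*)\tilde A-\lambda\big(H(\rho)-H(\rho^*)\big)\big],
\end{equation*}
where replacing $A_{\pi_0}$ by $\tilde A$ is legitimate because $\mathbb{E}[\rho-\rho^*]=0$. Pointwise convexity of $H$ together with $\lambda H'(\rho^*)=\tilde A$ gives $H(\rho)-H(\rho^*)\ge H'(\rho^*)(\rho-\rho^*)$, so each integrand is $\le 0$. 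This proves global optimality without invoking infinite-dimensional KKT theory; integrability of $Q_{\pi_0}$ is assumed as needed.
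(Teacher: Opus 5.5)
Your proposal is correct and follows essentially the same route as the paper: per-state decomposition, pointwise Lagrangian stationarity giving the $\tanh$ (Fermi--Dirac) form, the multiplier fixed by normalization, and the $\arg\min$ characterization via $g'(x)=\tfrac12\tanh(x/2)$ and convexity of $g$. The paper first rewrites the objective in terms of $Q_{\pi_0}$ using $\mathbb{E}_{a\sim\pi_0}[\rho]=1$, whereas you absorb $V_{\pi_0}(s)$ into the multiplier, which is equivalent; your explicit convexity check is the Lagrangian sufficiency step written out, and your monotonicity-and-limits argument for $\mu(s)$ is slightly sharper than the paper's, since it yields uniqueness as well as existence.
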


The detailed proof of Theorem~\ref{thr:conservative} is provided in Appendix~\ref{sec:proof_policy}.
Intuitively, the optimal solution assigns a higher ratio to actions with a higher advantage while keeping the ratio between $[1-\epsilon, 1+\epsilon]$.

We can obtain a monotonic performance guarantee for the optimal solution.
\begin{theorem}[Monotonic performance guarantee]
    The optimal policy in Theorem~\ref{thr:conservative} satisfies
    \begin{align*}
        &\eta(\pi^*) = \eta(\pi_0) + \epsilon \mathbb{E}_{s\sim d_{\pi^*}, a\sim \pi_0(\cdot|s)}\left[\tanh\left(\frac{\tilde{A}_{\pi_0}}{2\lambda}\right)\tilde{A}_{\pi_0}\right]=:\eta(\pi_0) + \epsilon B,
    \end{align*}
    where $\tilde{A}_{\pi_0}$ abbreviates $\tilde{A}_{\pi_0}(s,a)$, $B$ is a non-negative constant given fixed $\pi_0$.
     \label{thr:soft_guarantee}
\end{theorem}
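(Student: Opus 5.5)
The plan is to start from the exact performance-difference identity~\eqref{eq:eta_adv}, applied with $\pi=\pi^*$:
\begin{equation*}
\eta(\pi^*)=\eta(\pi_0)+\mathbb{E}_{s\sim d_{\pi^*},\,a\sim\pi^*(\cdot|s)}\left[A_{\pi_0}(s,a)\right].
\end{equation*}
Fix a state $s$. The inner expectation over $\pi^*$ becomes an expectation over $\pi_0$ by importance weighting with the ratio $\rho^*(a|s)=\pi^*(a|s)/\pi_0(a|s)=1+\epsilon\tanh(\tilde{A}_{\pi_0}/(2\lambda))$ from Theorem~\ref{thr:conservative}. The outer expectation stays over $d_{\pi^*}$; this is exactly why the statement has $d_{\pi^*}$ and not $d_{\pi_0}$. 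So for each $s$ the inner term is
\begin{equation*}
\mathbb{E}_{a\sim\pi_0}\left[A_{\pi_0}\right]+\epsilon\,\mathbb{E}_{a\sim\pi_0}\left[\tanh\left(\frac{\tilde{A}_{\pi_0}}{2\lambda}\right)A_{\pi_0}\right].
\end{equation*}
The first term vanishes because $\mathbb{E}_{a\sim\pi_0}[A_{\pi_0}(s,a)]=\mathbb{E}_{a\sim\pi_0}[Q_{\pi_0}]-V_{\pi_0}(s)=0$.

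Next I replace $A_{\pi_0}$ by $\tilde{A}_{\pi_0}$ in the second term. The two differ by a function of $s$ only: $A_{\pi_0}-\tilde{A}_{\pi_0}=\mu_{\pi_0}(s)-V_{\pi_0}(s)$. That difference contributes $\epsilon(\mu_{\pi_0}(s)-V_{\pi_0}(s))\,\mathbb{E}_{a\sim\pi_0}[\tanh(\tilde{A}_{\pi_0}/(2\lambda))]$, which is zero by the defining soft-median condition in Theorem~\ref{thr:conservative}. Taking the outer expectation over $s\sim d_{\pi^*}$ then gives the stated identity with
\begin{equation*}
B=\mathbb{E}_{s\sim d_{\pi^*},\,a\sim\pi_0}\left[\tanh\left(\frac{\tilde{A}_{\pi_0}}{2\lambda}\right)\tilde{A}_{\pi_0}\right].
\end{equation*}

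Non-negativity follows pointwise. Since $\lambda>0$, $\tanh(x/(2\lambda))$ has the same sign as $x$, so $x\tanh(x/(2\lambda))\ge 0$ for every $x$, with equality only at $x=0$. Hence $B\ge 0$, and $B>0$ whenever $\tilde{A}_{\pi_0}\neq 0$ on a set of positive $d_{\pi^*}\times\pi_0$ measure. $B$ depends only on $\pi_0$, because $\pi^*$, and hence $d_{\pi^*}$, is determined by $\pi_0$ through Theorem~\ref{thr:conservative}.

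The argument is almost entirely bookkeeping. The only delicate point is that the importance reweighting and the cancellations are done per state, conditional on $s$, before integrating against $d_{\pi^*}$. Both zero-mean facts, $\mathbb{E}_{\pi_0}[A_{\pi_0}]=0$ and $\mathbb{E}_{\pi_0}[\tanh(\tilde{A}_{\pi_0}/(2\lambda))]=0$, hold for every $s$, so they remain valid under any state distribution. Beyond this, I would only need mild integrability, namely bounded $Q_{\pi_0}$, so that the expectations are finite and can be split.
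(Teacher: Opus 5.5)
Your proposal is correct and follows essentially the paper's route: the per-state computation (reweighting by $\rho^*$, using $\mathbb{E}_{\pi_0}[A_{\pi_0}]=0$, and removing the state-only offset $\mu_{\pi_0}(s)-V_{\pi_0}(s)$ via the soft-median condition) is exactly the content of the paper's Lemma~\ref{lem:l}. The only cosmetic difference is in the global step: you invoke \eqref{eq:eta_adv} directly, while the paper re-derives that identity in matrix form, writing $r_{\pi^*}=(I-\gamma P_{\pi^*})V_{\pi_0}+\epsilon B_1$ (with $B_1(s)$ the per-state gain) and then inverting $(I-\gamma P_{\pi^*})$.
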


The proof of Theorem~\ref{thr:soft_guarantee} is detailed in Appendix~\ref{sec:proof_mono}.
Note that the term $\tanh(\frac{\tilde{A}_{\pi_0}}{2\lambda})\tilde{A}_{\pi_0}$ is always non-negative since the signs of $\tanh(\frac{\tilde{A}_{\pi_0}}{2\lambda})$ and $\tilde{A}_{\pi_0}$ are always the same. 
Therefore, our optimal policy $\pi^*$ guarantees monotonic improvement with an analytical improvement bound, in contrast to the guarantee for TRPO in Theorem 1 of \cite{schulman2015trust}.
However, the policy $\pi^*$ may not be a member of the class of parameterized policies $\Pi$. 
Consequently, in Section~\ref{sec:theory} and~\ref{sec:algorithm}, we further develop the policy optimization loss algorithm by minimizing a certain divergence from $\pi$ to $\pi^*$. %

\begin{remark}[Optimal solution to unregularized Problem~\eqref{prob:bounded_ratio}]\label{rmk:unreg}
    Note that by taking $\lambda \rightarrow 0$ in Theorem~\ref{thr:conservative} and Theorem~\ref{thr:soft_guarantee}, one can obtain the optimal ratio and monotonic guarantees for the unregularized Problem~\eqref{prob:bounded_ratio}.
    In many cases, one can simplify the resulting optimal policy as $\pi^*(a|s)=[1+\epsilon\text{sign}(\tilde{A}_{\pi_0})]\pi_0(a|s)$ in~\eqref{eq:simple_solution}, where  $\mu_{\pi_0}(s)$ is the median of $Q_{\pi_0}(s, a)$ over $\pi_0(\cdot|s)$. %
    Such simplification holds if $\forall \,s,\,\exists\,\mu(s)$, such that
    \begin{equation}
         \mathbb{E}_{a\sim\pi_0(\cdot|s)}[\text{sign}(Q_{\pi_0}(s,a)-\mu(s))]=0. \label{eq:cond_sim}
    \end{equation}
    \looseness -1 Otherwise, the simplified $\pi^*$ can never be normalized.
    One valid case is a uniform density $\pi_0(\cdot|s)$ with continuous $\mathcal{A}$ and a $Q$-function $Q_{\pi_0}(s,a)$ which is smooth over $a$.
    However, there are also counterexamples.
    Consider, for instance, an MDP with a single state and a discrete action space $\mathcal A = \{a_1, a_2\}.$
    Assume that $Q_{\pi_0}(a_2)>Q_{\pi_0}(a_1)$, and $\pi_0(a_1)=\frac{1}{4}, \pi_0(a_2)=\frac{3}{4}$.
    Then for any $\mu\in \mathbb{R}$, condition~\eqref{eq:cond_sim} does not hold.
    While the simplified interpretation of $\pi^*$ is only valid in special cases, the result of \Cref{thr:conservative} still holds for arbitrarily small $\lambda > 0$ and general spaces (see Appendix~\ref{sec:proof_policy}).
\end{remark}

\subsection{Alternative Perspective: Minimizing Divergence from Optimal Policy}
\label{sec:theory}
In this section, we consider the policy optimization problem as minimizing the divergence to the optimal solution, instead of directly applying policy gradient methods.
Specifically, given the optimal policy obtained from \Cref{thr:conservative}, we can formulate policy optimization as 
\begin{align*}
    \min_{\pi_\theta\in\Pi} D(\pi_\theta, \pi^*),
\end{align*}
where $\pi_\theta$ is the parameterized policy, $D$ is a divergence function such as the KL-divergence, total variation (TV), etc.
Specifically, the TV (without $\frac{1}{2}$ multiplier) for each state can be expressed as
\begin{equation}
    \begin{split}
    D_{\theta}^{TV}(s):=\sum_a \left|\pi^*(a|s) - \pi_\theta(a|s)\right| =\mathbb{E}_{a\sim\pi_0(\cdot|s)}\left[\left|\frac{\pi^*(a|s)}{\pi_0(a|s)} - \frac{\pi_\theta(a|s)}{\pi_0(a|s)}\right|\right].
\end{split}\label{eq:tv_loss}
\end{equation}
We also consider an advantage-weighted TV (ATV) loss function defined as %
\begin{equation}
\begin{split}
&D_{\theta}^{ATV}(s):=\mathbb{E}_{a\sim\pi_0(\cdot|s)}\left[\left|\frac{\pi^*(a|s)}{\pi_0(a|s)} - \frac{\pi_\theta(a|s)}{\pi_0(a|s)}\right|\cdot\left|A_{\pi_0}\right|\right].
\end{split}\label{eq:true_loss}
\end{equation}
Notably, this divergence is directly correlated with the performance improvement of the parameterized policy $\pi_\theta$, as detailed in the following Corollary.
\begin{corollary}[Performance improvement guarantee with policy approximation error]
    Consider $D_{\theta}^{ATV}$ defined in~\eqref{eq:true_loss} with $\pi^*$ given from Theorem~\ref{thr:conservative}.
    Then it holds that
    \begin{align*}
        &\eta(\pi_\theta) \geq \eta(\pi_0) +  \mathbb{E}_{s\sim{d_{\pi_\theta}, a\sim \pi_0}(\cdot|s)}\left[\epsilon\tanh\left(\frac{\tilde{A}_{\pi_0}}{2\lambda}\right)\tilde{A}_{\pi_0} - D_{\theta}^{ATV}(s)\right],
    \end{align*}
    where $\tilde{A}_{\pi_0}$ abbreviates $\tilde{A}_{\pi_0}(s,a)$.
     \label{coro:soft_guarantee_true_l}
\end{corollary}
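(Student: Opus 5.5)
The plan is to start from the exact performance-difference identity~\eqref{eq:eta_adv} applied to $\pi_\theta$:
\begin{align*}
\eta(\pi_\theta) = \eta(\pi_0) + \mathbb{E}_{s\sim d_{\pi_\theta}}\Big[\mathbb{E}_{a\sim\pi_0(\cdot|s)}\big[\rho_\theta(a|s)\,A_{\pi_0}(s,a)\big]\Big],\qquad \rho_\theta:=\frac{\pi_\theta}{\pi_0}.
\end{align*}
The state distribution here is already $d_{\pi_\theta}$, as in the claimed bound, so all remaining work is pointwise in $s$ and no change of state measure is needed. For fixed $s$, I will write $\rho^*:=\pi^*/\pi_0 = 1+\epsilon\tanh(\tilde A_{\pi_0}/2\lambda)$ from Theorem~\ref{thr:conservative}. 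I then split $\rho_\theta A_{\pi_0} = \rho^* A_{\pi_0} + (\rho_\theta-\rho^*)A_{\pi_0}$.

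\textbf{First term.} I want to show $\mathbb{E}_{a\sim\pi_0}[\rho^* A_{\pi_0}] = \epsilon\,\mathbb{E}_{a\sim\pi_0}[\tanh(\tilde A_{\pi_0}/2\lambda)\tilde A_{\pi_0}]$, which is the same step used for Theorem~\ref{thr:soft_guarantee}.
\begin{itemize}[noitemsep, nolistsep, leftmargin=*]
\item Since $\mathbb{E}_{a\sim\pi_0}[A_{\pi_0}]=0$, we have $\mathbb{E}_{\pi_0}[\rho^* A_{\pi_0}] = \epsilon\,\mathbb{E}_{\pi_0}[\tanh(\cdot)A_{\pi_0}]$.
\item Next write $A_{\pi_0} = Q_{\pi_0}-V_{\pi_0}(s) = \tilde A_{\pi_0} + (\mu_{\pi_0}(s)-V_{\pi_0}(s))$.
\item The extra term is a constant in $a$ multiplied by $\mathbb{E}_{\pi_0}[\tanh(\tilde A_{\pi_0}/2\lambda)]$. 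This expectation vanishes by the soft-median condition of Theorem~\ref{thr:conservative}.
\end{itemize}
This gives the exact identity above.

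\textbf{Second term.} I bound it from below by the triangle-type inequality
\begin{align*}
(\rho_\theta-\rho^*)A_{\pi_0} \ge -|\rho_\theta-\rho^*|\,|A_{\pi_0}|.
\end{align*}
Taking $\mathbb{E}_{a\sim\pi_0}$ gives exactly $-D^{ATV}_\theta(s)$ by~\eqref{eq:true_loss}. Because $D_\theta^{ATV}(s)$ does not depend on $a$, it can be placed inside the inner expectation over $a\sim\pi_0$, which matches the form of the statement. Combining the two terms and integrating over $s\sim d_{\pi_\theta}$ yields the corollary.

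\textbf{Main obstacle.} The delicate point is justifying the first step in general (continuous) spaces. This needs the existence of the soft median $\mu_{\pi_0}(s)$ and integrability of $\tanh(\cdot)\tilde A_{\pi_0}$ and $\rho_\theta A_{\pi_0}$, so that the expectation can be split linearly. I would take these from the hypotheses underlying Theorem~\ref{thr:conservative}: boundedness of $Q_{\pi_0}$ suffices, because $\tanh$ is bounded and $\rho^*\in[1-\epsilon,1+\epsilon]$. Beyond that, the argument is just the performance-difference identity plus one absolute-value inequality.
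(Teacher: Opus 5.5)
Your proposal is correct and is essentially the paper's proof: your two per-state steps (the soft-median computation of $\mathbb{E}_{a\sim\pi_0}[\rho^* A_{\pi_0}]$ and the bound $(\rho_\theta-\rho^*)A_{\pi_0}\ge -|\rho_\theta-\rho^*|\,|A_{\pi_0}|$) are exactly Lemma~\ref{lem:l} and Lemma~\ref{lem:l_theta}. The only difference is that you propagate the per-state bound by citing the performance-difference identity~\eqref{eq:eta_adv}, whereas the paper re-derives that same identity in matrix form via $V_{\pi_\theta}\succeq V_{\pi_0}+(I-\gamma P_{\pi_\theta})^{-1}(\epsilon B_1-D_\theta^{ATV})$; your route is therefore slightly shorter and avoids the paper's discrete-state matrix representation.
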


\looseness -1 The proof of Corollary~\ref{coro:soft_guarantee_true_l} is provided in~\ref{sec:proof_mono_true_l}.
Corollary~\ref{coro:soft_guarantee_true_l} shows that by minimizing the loss $D_{\theta}^{ATV}$ over the state distribution $d_{\pi_\theta}$, we can improve performance w.r.t.~policy $\pi_\theta$ as long as the parameterized policy class is sufficiently expressive.
However, minimizing $\mathbb{E}_{s\sim d_{\pi_\theta}}$ over the policy parameters $\theta$ is non-trivial due to the dependence of the state distribution on $\pi_\theta$.
Consequently, in practice we only optimize the expectation of the divergence over the old policy, leading to the objectives 
\begin{equation}
    J^{ATV}(\theta):=\mathbb{E}_{s\sim d_{\pi_0}}[D_\theta^{ATV}(s)],\quad J^{TV}(\theta):=\mathbb{E}_{s\sim d_{\pi_0}}[D_\theta^{TV}(s)].\label{eq:J_loss}
\end{equation} 
The following Corollary~\ref{coro:loss_tv} expresses a lower bound on the performance of the policy $\pi_{\theta}$ in terms of these quantities.

\begin{corollary}[Performance improvement guarantee with loss functions]
    With $B$ defined in Theorem~\ref{thr:soft_guarantee}, and $\Tilde{\delta}:=\max_{s} \mathbb{E}_{a\sim \pi_0(\cdot|s)}\left[\tanh\left(\frac{\tilde{A}_{\pi_0}}{2\lambda}\right)\tilde{A}_{\pi_0}\right]$, it holds that
    \begin{align*}
        \eta(\pi_\theta) 
        \geq\,\,&\eta(\pi_0) +  \epsilon B 
        - J^{ATV}(\theta) 
        - \frac{\gamma D^{ATV}_{\max}}{1-\gamma} J^{TV}(\theta) - \frac{ \gamma \epsilon D^{ATV}_{\max}}{(1-\gamma)^2} - \frac{\gamma\epsilon \Tilde{\delta}D^{TV}_{\max}}{(1-\gamma)^2},
    \end{align*}~\label{coro:loss_tv}
    where $D^{ATV}_{\max}:=\max_s D_{\theta}^{ATV}(s)$ and $D^{TV}_{\max}:=\max_s D_{\theta}^{TV}(s)$.
\end{corollary}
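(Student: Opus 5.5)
Starting from Corollary~\ref{coro:soft_guarantee_true_l}, the task is to move every expectation from $d_{\pi_\theta}$ to $d_{\pi_0}$ (where $J^{ATV}$, $J^{TV}$ are defined) and to $d_{\pi^*}$ (where $B$ is defined), charging the cost of each move to a state-distribution mismatch term. Write $f(s):=\mathbb{E}_{a\sim\pi_0(\cdot|s)}[\tanh(\tilde A_{\pi_0}/2\lambda)\tilde A_{\pi_0}]\in[0,\tilde\delta]$. Corollary~\ref{coro:soft_guarantee_true_l} gives
\begin{align*}
\eta(\pi_\theta)\ge \eta(\pi_0)+\epsilon\,\mathbb{E}_{d_{\pi_\theta}}[f]-\mathbb{E}_{d_{\pi_\theta}}[D^{ATV}_\theta].
\end{align*}
We then split this into $\epsilon\,\mathbb{E}_{d_{\pi^*}}[f]+\epsilon(\mathbb{E}_{d_{\pi_\theta}}-\mathbb{E}_{d_{\pi^*}})[f]$ and $\mathbb{E}_{d_{\pi_0}}[D^{ATV}_\theta]+(\mathbb{E}_{d_{\pi_\theta}}-\mathbb{E}_{d_{\pi_0}})[D^{ATV}_\theta]$. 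The leading terms are exactly $\epsilon B$ and $J^{ATV}(\theta)$, so what remains is to bound the two differences.

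The key tool is a bound on the difference of unnormalized discounted visitation measures. Use $d_\pi=(I-\gamma P_\pi)^{-1}d_0$ together with the resolvent identity $d_\pi-d_{\pi'}=\gamma(I-\gamma P_{\pi})^{-1}(P_\pi-P_{\pi'})d_{\pi'}$. For a function $h$ with $0\le h\le h_{\max}$ this gives
\begin{align*}
|(\mathbb{E}_{d_\pi}-\mathbb{E}_{d_{\pi'}})[h]|\le \frac{\gamma h_{\max}}{1-\gamma}\,\mathbb{E}_{s\sim d_{\pi'}}\Big[\sum_a|\pi(a|s)-\pi'(a|s)|\Big].
\end{align*}

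\textbf{Second difference.} Take $h=D^{ATV}_\theta$, $\pi=\pi_\theta$, $\pi'=\pi_0$. The inner per-state TV is $\sum_a|\pi_\theta-\pi_0|\le D^{TV}_\theta(s)+\sum_a|\pi^*-\pi_0|$ by the triangle inequality through $\pi^*$. Theorem~\ref{thr:conservative} gives $|\pi^*/\pi_0-1|\le\epsilon$, so $\sum_a|\pi^*-\pi_0|\le\epsilon$. Since $d_{\pi_0}$ has mass $1/(1-\gamma)$, this yields
\begin{align*}
\frac{\gamma D^{ATV}_{\max}}{1-\gamma}J^{TV}(\theta)+\frac{\gamma\epsilon D^{ATV}_{\max}}{(1-\gamma)^2},
\end{align*}
which matches the two middle terms of the statement.

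\textbf{First difference.} Take $h=f$ with $h_{\max}=\tilde\delta$, comparing $\pi_\theta$ against $\pi^*$ by expanding around $d_{\pi^*}$. The per-state TV is $D^{TV}_\theta(s)\le D^{TV}_{\max}$, and the total mass is $1/(1-\gamma)$, so the term is bounded by $\epsilon\cdot\frac{\gamma\tilde\delta}{1-\gamma}\cdot\frac{D^{TV}_{\max}}{1-\gamma}$. This is exactly the last term of the statement, and the sign is harmless because we only need a lower bound. Summing the pieces gives the claim.

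\textbf{Main obstacle.} The hardest step is the visitation-difference lemma with the correct constants for the unnormalized $d_\pi$. I would first prove it via the resolvent identity. Alternatively, use a coupling argument: the two chains agree until the first step where the actions differ, which happens with per-step probability at most half the TV. Then check that the factor conventions (TV without the $\tfrac12$) reproduce the stated $\gamma/(1-\gamma)$ factors. The coupling argument may give a factor $\tfrac12$ improvement, which is harmless.
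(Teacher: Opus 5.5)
Your proposal is correct and follows the paper's proof: the paper also starts from the vector form of Corollary~\ref{coro:soft_guarantee_true_l}. It moves the ATV term to $d_{\pi_0}$ via the resolvent identity $(I-\gamma P_{\pi_\theta})^{-1}-(I-\gamma P_{\pi_0})^{-1}=\gamma(I-\gamma P_{\pi_0})^{-1}(P_{\pi_\theta}-P_{\pi_0})(I-\gamma P_{\pi_\theta})^{-1}$, which is your visitation-difference identity transposed, together with the triangle inequality through $\pi^*$ giving $\epsilon+D^{TV}_\theta(s)$. It moves the $B_1$ term to $d_{\pi^*}$ by the analogous identity, and the constants come out exactly as you computed, so the coupling alternative is not needed.
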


The proof of Corollary~\ref{coro:loss_tv} is detailed in Appendix~\ref{sec:proof_coro_loss_tv}.
Corollary~\ref{coro:loss_tv} decomposes the performance lower bound into a non-negative term, $\epsilon B$, and several negative terms which depend on the gap between the optimized policy $\pi_{\theta}$ and the policy $\pi^*$. The first two of these gap terms depend on $J^{ATV}(\theta)$ and $J^{TV}(\theta)$. Both of these quantities can be estimated from trajectories collected under $\pi_0$ and minimized by optimizing $\theta$, motivating a loss defined as a weighted combination of these terms. 
The other two gap terms are characterized by the worst case divergence of $\pi_{\theta}$ from $\pi^*$ over the state space through the quantities $D_{\max}^{ATV}$ and $D_{\max}^{TV}$. 
Though these quantities are generally not computable, they can be related to the expected losses over the distribution $d_{\pi_0}$ under additional assumptions (e.g., adequate state coverage under $d_{\pi_0}$). Notably, with perfect policy approximation ($\pi_{\theta}=\pi^*)$, it holds that $J^{ATV}(\theta) = J^{TV}(\theta)= D^{ATV}_{\max} = D^{TV}_{\max}=0,$ recovering the original monotonic performance guarantee of Theorem~\ref{thr:soft_guarantee}. 

In contrast, the bound from TRPO in Theorem 1 of~\cite{schulman2015trust} does not contain a positive term.
Besides, it penalizes the worst-case divergence between the updated policy and the current policy $\pi_0$ through $D_{TV}^{\max}(\pi,\pi_0)$, rather than the approximation error to an ideal solution $\pi^*$. Consequently, this negative term reflects the \emph{magnitude of the update} away from $\pi_0$. It vanishes only in the degenerate case $\pi=\pi_0$ (i.e., no policy change), and is generally nonzero whenever a nontrivial policy update occurs. %

\looseness -1 Corollary~\ref{coro:loss_tv} also motivates choosing a small $\epsilon$.
For small $\epsilon>0$, $\pi^*$ remains close to $\pi_0$ by construction, so matching $\pi^*$ typically requires only a small deviation from a realizable policy in the class (namely $\pi_0$), making the approximation error terms easier to control. 
Since $\pi_0$ is realizable in the policy class, the optimal values of $J^{ATV}(\theta)$ and $J^{TV}(\theta)$ approach zero as $\epsilon \to 0$. Therefore, under such regularity conditions, it holds that  $\frac{\gamma D_{\max}^{ATV}}{(1-\gamma)^2} + \frac{\gamma\tilde{\delta} D_{\max}^{TV}}{(1-\gamma)^2} \ll B$ for $\epsilon$ sufficiently small, therefore guaranteeing improvement.

\subsection{Bounded Policy Optimization}
\label{sec:algorithm}

In this section, we present the practical implementation of our algorithm.
As in PPO, we use a value network $V_\phi$ to estimate $A_{\pi_0}$.
Specifically, we estimate the return value $R_\phi(s, a)$ using generalized advantage estimation~\cite{schulman2015high}, and use it to update the value function by minimizing
\begin{align}
    J^{VF}(\phi):=\mathbb{E}_{s \sim d_{\pi_0}, a \sim \pi_0(\cdot|s)}[(sg(R_\phi(s, a)) - V_\phi(s))^2], \label{eq:vf_loss}
\end{align}
where $sg$ denotes stop gradient.
In addition, following Theorem~\ref{thr:conservative}, we further train a network $\mu_\psi$ to minimize the normalization loss
\begin{align}
    J^{MF}(\psi):=\mathbb{E}_{s \sim d_{\pi_0}, a \sim \pi_0(\cdot|s)}\left[\lambda g\left(\frac{sg(R_\phi(s,a)) - \mu_\psi(s)}{\lambda}\right)\right], \label{eq:mf_loss}
\end{align}
with $g$ defined in Theorem~\ref{thr:conservative}.
For numerical stability, we use the equivalent representation $g(x)=-\frac{x}{2}+\text{softplus}(x)$.
The practical loss function for $\theta$ uses the estimated advantage function to approximate $J^P(\theta):=J^{ATV}(\theta) + \alpha_1 J^{TV}(\theta)$ with $\alpha_1$ as a tunable weight:
\begin{equation}
    \begin{split}
        \hat{J}^{P}(\theta):=\mathbb{E}_{s \sim d_{\pi_0}, a \sim \pi_0(\cdot|s)}\left[\left|1 + \epsilon\tanh\left(\frac{\hat{A}_{\pi_0}}{2\lambda}\right) - \frac{\pi_\theta(a|s)}{\pi_0(a|s)}\right|\cdot sg\left(\left|R_\phi(s,a)-V_\phi(s)\right| + \alpha_1\right)\right],
    \end{split} \label{eq:loss}
\end{equation}
where $\hat{A}_{\pi_0}:=sg(R_\phi(s,a)-\mu_\psi(s))$.
Note that $\hat{J}^{P}(\theta)$ is not exactly $J^P$, but the gap can be controlled by minimizing the estimation error of $V_\phi$ and $\mu_\psi$.
Our final bounded policy optimization algorithm follows a PPO-style training procedure, summarized in Algorithm~\ref{algo:BPO}.

\begin{algorithm}[!hbtp]
    \caption{Bounded policy optimization (BPO)}\label{algo:BPO}
    \begin{algorithmic}[1]
    \STATE Initialize $\pi_\theta, V_\phi$, $\mu_\psi$, choose a sufficiently small $\lambda$
    \FOR{$i=1,2,...$} 
    \STATE Assign $\pi_0 \leftarrow \pi_\theta$
    \STATE Run $\pi_0$ for $N$ steps, and collect the dataset $\mathcal{D}:=\{s^j,a^j, R^j, \pi_0(a^j|s^j)\}_{j=1}^N$.
    \STATE Update $\theta, \phi, \psi$ by minimizing
    $$\hat{J}^{P}(\theta)+w_1 J^{VF}(\phi) + w_2 J^{MF}(\psi),$$ where $\hat{J}^{P}, J^{VF}, J^{MF}$ are defined in~\eqref{eq:loss},~\eqref{eq:vf_loss},~\eqref{eq:mf_loss}, and evaluated from $\mathcal{D}$.
    \ENDFOR
    \end{algorithmic}
\end{algorithm}

\subsection{Revisiting the PPO Objective}
\label{sec:revisitPPO}
In this section, we connect our theory and algorithmic framework to PPO~\cite{schulman2017proximal}.
In PPO, the following surrogate objective function is introduced
\begin{equation}
    \mathbb{E}_{s\sim d_{\pi_0},a\sim\pi_0}\left[\min\left\{\text{clip}\left(\rho,1-\epsilon,1+\epsilon\right)A_{\pi_0}, \rho A_{\pi_0}\right\}\right]:=J_{PPO}(\theta),\label{eq:ppo}
\end{equation}
where $\rho:=\frac{\pi_\theta(a|s)}{\pi_0(a|s)}$ is the ratio between the new and old policies, and $A_{\pi_0}$ denotes $A_{\pi_0}(s, a)$.

We observe a strong correlation between the BPO loss function and the PPO loss function.
To show this correlation, we first introduce an equivalent form of the PPO loss in the following proposition.
\begin{proposition}
    Optimizing the loss function $J_{PPO}(\theta)$ in~\eqref{eq:ppo} is equivalent to minimizing the following function
    \begin{align*}
    &J'(\theta):=\mathbb{E}_{s\sim{d_{\pi_0}}, a\sim \pi_0(\cdot|s)}\left[l'\left(\frac{\pi_\theta(a|s)}{\pi_0(a|s)}\right)\right],
    \end{align*}
    where
    \begin{equation*}
\begin{split}
        &l'(\rho ):=\begin{cases}|A_{\pi_0}|\cdot |\rho - (1+\epsilon\cdot\text{sign}(A_{\pi_0}) )|, & [\rho - (1+\epsilon\cdot\text{sign}(A_{\pi_0}))]\cdot A_{\pi_0} \leq 0, \\[2pt]
0, & \text{Otherwise}. 
\end{cases}
\end{split}
\end{equation*}\label{prop:ppo_equ}
\end{proposition}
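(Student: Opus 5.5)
Let $c:=1+\epsilon\,\mathrm{sign}(A_{\pi_0})$ denote the target ratio. The plan is to show that, for each fixed $(s,a)$, the per-sample PPO objective differs from $-l'(\rho)$ only by a constant independent of $\theta$:
\begin{equation*}
\min\{\text{clip}(\rho,1-\epsilon,1+\epsilon)A_{\pi_0},\rho A_{\pi_0}\} = -l'(\rho) + c\,A_{\pi_0}.
\end{equation*}
The term $c\,A_{\pi_0} = A_{\pi_0}+\epsilon|A_{\pi_0}|$ depends only on $\pi_0$ and the advantage, not on $\theta$. Taking expectations over $s\sim d_{\pi_0}$ and $a\sim\pi_0$ then gives $J_{PPO}(\theta) = -J'(\theta) + \text{const}$. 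Hence maximizing $J_{PPO}$ and minimizing $J'$ have the same maximizers, and their gradients agree up to sign, so the two objectives are equivalent.

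To verify the pointwise identity, I first dispose of the case $A_{\pi_0}=0$, where both sides vanish. For $A_{\pi_0}>0$ we have $c=1+\epsilon$, and the PPO term equals $A_{\pi_0}\min\{\text{clip}(\rho),\rho\}$. Since $\min\{\text{clip}(\rho,1-\epsilon,1+\epsilon),\rho\}=\min\{\rho,1+\epsilon\}$ for all $\rho\ge 0$, there are two sub-cases. If $\rho\le 1+\epsilon$, the PPO term is $\rho A_{\pi_0} = cA_{\pi_0} - A_{\pi_0}(c-\rho)$, and $(\rho-c)A_{\pi_0}\le 0$ holds, so $l'=|A_{\pi_0}|\,|\rho-c|=A_{\pi_0}(c-\rho)$; the identity follows. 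If $\rho>1+\epsilon$, the PPO term is $cA_{\pi_0}$, and the condition $(\rho-c)A_{\pi_0}\le0$ fails, so $l'=0$; the identity again holds.

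The case $A_{\pi_0}<0$ is symmetric, with $c=1-\epsilon$. Here $\min\{\text{clip}(\rho)A,\rho A\} = A\max\{\text{clip}(\rho),\rho\} = A\max\{\rho,1-\epsilon\}$. If $\rho\ge 1-\epsilon$, the PPO term is $\rho A_{\pi_0} = cA_{\pi_0}-|A_{\pi_0}|(\rho-c)$, which matches $l'$ because $(\rho-c)A_{\pi_0}\le0$. If $\rho<1-\epsilon$, the PPO term is $cA_{\pi_0}$ and $l'=0$.

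The only delicate step is the simplification of the $\min$ with the clip, in particular confirming that the lower clip is inactive when $A>0$ (and the upper clip when $A<0$). A secondary point is the boundary $\rho=c$: there $l'=0$ under either branch, so the assignment of the boundary to the first case is harmless. Finally, I would note that the equivalence holds as an identity of objectives, not merely of optima, so the gradients coincide almost everywhere.
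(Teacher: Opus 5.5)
Your proof is correct and follows essentially the same route as the paper: both show that, for each $(s,a)$, the negative per-sample PPO objective differs from $l'(\rho)$ by the $\theta$-independent constant $(1+\epsilon\,\text{sign}(A_{\pi_0}))A_{\pi_0}$, checking cases on the sign of $A_{\pi_0}$ and on which side of the clip boundary $\rho$ lies, and then take expectations. Your reduction of the clipped minimum to $A_{\pi_0}\min\{\rho,1+\epsilon\}$ (resp.\ $A_{\pi_0}\max\{\rho,1-\epsilon\}$) and your separate handling of $A_{\pi_0}=0$ only spell out steps the paper leaves implicit.
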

Intuitively, this equivalence follows from the fact that adding or subtracting a constant from the objective function does not change the optimal solution.  
This is illustrated in Fig.~\ref{fig:bpo_ppo} (a vs b).
A proof is provided in Appendix~\ref{app:proof_ppo_equ}.

On the other hand,  as $\lambda \to 0$, the loss $J^{ATV}$ in~\eqref{eq:J_loss} can, in many cases (Remark~\ref{rmk:unreg}), be expressed as
\begin{align*}
J^{ATV}(\theta) \approx \mathbb{E}_{s\sim d_{\pi_0},a\sim \pi(\cdot|s)}[l^{BPO}(\rho)],\quad \text{where}\,\,
    l^{BPO}(\rho)=|A_{\pi_0}|\cdot|\rho - (1+\epsilon\cdot\text{sign}(\Tilde{A}_{\pi_0}) )|.
\end{align*}
Thus, the loss $l^{BPO}$ resembles the PPO loss $l'$ in Proposition~\ref{prop:ppo_equ} when $[\rho - (1+\epsilon\cdot\text{sign}(A_{\pi_0}))]\cdot A_{\pi_0} \leq 0$, as detailed in Section~\ref{sec:overview}. %
For $[\rho - (1+\epsilon\cdot\text{sign}(A_{\pi_0}))]\cdot A_{\pi_0} > 0$, BPO penalizes the policy for deviating from the original policy, which encourages satisfaction of the bounded-ratio constraints.
This is also partially addressed by the zero gradient of PPO and the target KL divergence mechanism \citep{serrano2023skrl}, which slows down the update of the new policy if it deviates too far from the original policy (i.e., if the KL divergence between the two surpasses the target KL divergence).
Recent PPO variants also utilize similar ideas by introducing negative gradients when $[\rho - (1+\epsilon\cdot\text{sign}(A_{\pi_0}))]\cdot A_{\pi_0} > 0$~\cite{wang2020truly,xie2024simple}, which can be theoretically justified by our framework.
Although the exact optimization dynamics of BPO and PPO differ, both follow a common principle: drive the policy ratio from $1$ toward the (approximate) analytical optimum of BRRL and then stop.
This offers a key insight into the underlying success of PPO-based methods.

\subsection{Extension to LLM Fine-Tuning}
\label{sec:gbpo}
\looseness -1 In the context of LLM fine-tuning, training an additional critic can be computationally expensive.
This challenge motivates the design of Group Relative Policy Optimization (GRPO)~\cite{shao2024deepseekmath}, which estimates advantages relative to a group of concurrent samples rather than utilizing an auxiliary value network. 
Building on this idea, we introduce Group-relative Bounded Policy Optimization (GBPO), an extension of BPO derived from Theorem~\ref{thr:conservative}.
Specifically, for a given prompt $q$, the model generates a group of sampled outcomes $\{o_1, o_2, \dots, o_G\}$. 
A reward model then assigns a score to each output, denoted by $\mathbf{R}=\{r_1, r_2, \dots, r_G\}$. 
As in standard GRPO, we estimate advantages using z-scores $A_i := \frac{r_i - \text{mean}(\mathbf{R})}{\text{std}(\mathbf{R})}$. 
As noted in Remark~\ref{rmk:unreg}, when the regularization parameter $\lambda$ is small, the implicit baseline $\mu_{\pi_0}(q)$ converges to the median of the Q-values. 
We therefore also estimate the \textit{median-advantage} as $\tilde{A}_i := \frac{r_i - \text{median}(\mathbf{R})}{\text{std}(\mathbf{R})}$. 
The GBPO objective function is then defined as:
{\allowdisplaybreaks
\begin{align*}
    &\hat{J}^{P}(\theta) = \mathbb{E}_{\substack{q \sim P(\mathcal{Q}) \\ o_1,\dots,o_G \sim \pi_0(\cdot|q)}} \Bigg[ \frac{1}{G} \sum_{i=1}^G \frac{1}{|o_i|} \sum_{t=1}^{|o_i|} \left| 1 + \epsilon\tanh\left(\frac{\tilde{A}_{i,t}}{2\lambda}\right) - \frac{\pi_\theta(o_{i,t}|q,o_{i,<t})}{\pi_0(o_{i,t}|q,o_{i,<t})} \right|\cdot |A_{i,t}| \Bigg],
\end{align*}
}where $t$ denotes the token index, and $\mathcal{Q}$ is the question set. In scenarios where a reward is only provided at the end of the sequence, the step-dependent advantages $A_{i,t}$ and $\tilde{A}_{i,t}$ are equal to the sequence-level $A_i$ and $\tilde{A}_i$, respectively. If per-step scores are available, these advantages can be estimated token-wise following the approach in \cite{shao2024deepseekmath}.

\subsection{Asymmetric Ratio Constraints and Cross Entropy Method}
\label{sec:cem}
In this section, we generalize Theorem~\ref{thr:conservative} to asymmetric ratio constraints.
Similar to~\eqref{eq:conservative}, we consider the regularized problem with general ratio boundaries $\forall s, a,\,\,c_l\leq\frac{\pi(a|s)}{\pi_0(a|s)}\leq c_h$, with $c_l<1<c_h$ 
\begin{equation}
\begin{split}
    &\max_{\pi}  \, L_{\pi_0}(\pi) - \lambda \mathbb{E}_{s\sim d_{\pi_0},a\sim \pi_0}\left[H'\left(\frac{\pi(a|s)}{\pi_0(a|s)}\right)\right], \\
    \text{where} \quad& H'(\rho):=(\rho - c_l) \log(\rho - c_l)  + (c_h - \rho)\log(c_h - \rho) + \rho\log\frac{c_h-1}{1-c_l}. \label{prob:unbalanced}
\end{split}
\end{equation}
Here, the regularizer $H'$ still takes its minimum at $\rho=1$, and provides log barriers for the asymmetric constraints $c_l\leq\rho\leq c_h$.
The optimal solution is detailed in the following Corollary.
\begin{corollary}(Asymmetric optimal policy)
    The optimal policy $\pi^*$ of the problem~\eqref{prob:unbalanced} satisfies
    \begin{align*}
        \frac{\pi^*(a|s)}{\pi_0(a|s)}=c_l + \frac{c_h - c_l}{1 + \frac{c_h-1}{1-c_l}\exp(-\tilde{A}'_{\pi_0}/{\lambda})},\,\,\tilde{A}'_{\pi_0}:=Q_{\pi_0}(s,a) - \mu'_{\pi_0}(s),
    \end{align*}
    where $\mu'_{\pi_0}(s)$ is called the soft-$\frac{c_h-1}{c_h-c_l}$-quantile satisfying
    \begin{align*}
        \mu'_{\pi_0}(s) = \arg\min_{\mu(s)} \mathbb{E}_{a\sim\pi_0(\cdot|s)}\left[ g'\left(\frac{Q_{\pi_0}(s, a) - \mu(s)}{\lambda}\right)\right],
    \end{align*}
    where $g':\mathbb{R}\rightarrow \mathbb{R}_{\geq 0},g'(x)=\ln\left(e^{\frac{c_h-1}{c_h-c_l}x} + \frac{c_h-1}{1-c_l}e^{-\frac{1-c_l}{c_h-c_l}x}\right)$. \label{coro:unbalance}
\end{corollary}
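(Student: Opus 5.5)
The plan is to mirror the argument for Theorem~\ref{thr:conservative}. Problem~\eqref{prob:unbalanced} decouples across states, so we solve it pointwise in $s$ with a Lagrange multiplier for normalization.

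Fix $s$ and write $\rho(a)=\pi(a|s)/\pi_0(a|s)$. The state-$s$ part of the objective is
\[
\mathbb{E}_{a\sim\pi_0}\bigl[\rho A_{\pi_0}(s,a)-\lambda H'(\rho)\bigr],
\]
subject to $\mathbb{E}_{a\sim\pi_0}[\rho]=1$. Because of this constraint, $A_{\pi_0}$ can be replaced by $Q_{\pi_0}(s,a)$; the difference is the constant $V_{\pi_0}(s)$. Introduce a multiplier $\mu(s)$ and maximize pointwise in $a$:
\[
\rho\,(Q_{\pi_0}(s,a)-\mu(s)) - \lambda H'(\rho).
\]
$H'$ is strictly convex on $(c_l,c_h)$, since $H''(\rho)=1/(\rho-c_l)+1/(c_h-\rho)>0$, and it has log-barrier derivatives at the endpoints. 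Hence the pointwise maximizer is interior and is characterized by the first-order condition
\[
\ln\frac{\rho-c_l}{c_h-\rho}+\ln\frac{c_h-1}{1-c_l}=\frac{Q-\mu}{\lambda}=:\frac{x}{\lambda}.
\]
Let $k=\frac{c_h-1}{1-c_l}$. Solving gives
\[
\frac{\rho-c_l}{c_h-\rho}=k^{-1}e^{x/\lambda}
\quad\Longrightarrow\quad
\rho=c_l+\frac{c_h-c_l}{1+k\,e^{-x/\lambda}},
\]
which is the stated formula. As a sanity check, at $x=0$ we get $\rho=c_l+(c_h-c_l)(1-c_l)/(c_h-c_l)=1$.

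Next, $\mu(s)$ is pinned down by normalization, $\mathbb{E}_{\pi_0}[\rho]=1$. We identify this condition with the first-order condition of the stated argmin. The key step is to check that, with $\alpha=\frac{c_h-1}{c_h-c_l}$ and $\beta=\frac{1-c_l}{c_h-c_l}$ (so $\alpha+\beta=1$),
\[
g'(x)=\ln\bigl(e^{\alpha x}+k e^{-\beta x}\bigr)
\]
has derivative
\[
g'^{\prime}(x)=\frac{\alpha-\beta k e^{-x}}{1+ke^{-x}}.
\]
We compare this with $(\rho-1)/(c_h-c_l)$, written as a single fraction over $1+ke^{-x/\lambda}$. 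Its numerator is
\[
\frac{c_l-1}{c_h-c_l}\bigl(1+ke^{-x/\lambda}\bigr)+1
=\alpha-\beta k e^{-x/\lambda},
\]
using $1-\beta=\alpha$. So $\rho-1=(c_h-c_l)\,g'^{\prime}\bigl((Q-\mu)/\lambda\bigr)$. Setting the $\mu$-derivative of $\mathbb{E}[g'((Q-\mu)/\lambda)]$ to zero is therefore exactly the normalization condition $\mathbb{E}[\rho]=1$. The function $g'$ is convex, being a log-sum-exp of affine functions, so the stationary point is the minimizer.

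For existence of $\mu$, note that $\mathbb{E}[\rho]$ is continuous and strictly decreasing in $\mu$, and tends to $c_h>1$ as $\mu\to-\infty$ and to $c_l<1$ as $\mu\to+\infty$. By the intermediate value theorem a solution exists, and it is unique. Since $g'^{\prime}\in(-\beta,\alpha)$ with asymptotic slopes $\alpha$ and $-\beta$, the function $g'$ is a smoothed pinball loss whose minimizer is a soft $\alpha$-quantile. This justifies the name soft-$\frac{c_h-1}{c_h-c_l}$-quantile.

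To conclude global optimality, observe that the Lagrangian is concave in $\pi$ and the constraint is affine. Strong duality, or simply pointwise maximization followed by choosing $\mu$ to satisfy the constraint, then shows the candidate is optimal.

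The main obstacle is the algebraic identification of normalization with the argmin of $g'$, which requires the careful $\alpha/\beta$ bookkeeping above. A secondary difficulty is rigor for general (continuous) spaces, where the pointwise argument must be handled as in Appendix~\ref{sec:proof_policy}.
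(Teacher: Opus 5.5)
Your proposal is correct and follows essentially the same route as the paper: a per-state Lagrangian with multiplier $\mu(s)$, then the first-order condition solved for the ratio, then identification of the normalization constraint with the stationarity condition of $\mathbb{E}_{a\sim\pi_0}[g'((Q_{\pi_0}-\mu)/\lambda)]$. Your direct identity expressing $\rho-1$ as $(c_h-c_l)$ times the derivative of $g'$ packages slightly more tidily what the paper does by reducing both conditions to $\mathbb{E}_{a\sim\pi_0}\bigl[(1-e^{-x})/(1+\frac{c_h-1}{1-c_l}e^{-x})\bigr]=0$, and your existence/uniqueness argument (monotonicity plus the intermediate value theorem), the convexity of $g'$, and the Lagrangian sufficiency step fill in points the paper leaves implicit.
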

Similar to Theorem~\ref{thr:conservative}, these results are closely related to policy optimization with asymmetric clip ratios~\cite{wang2025aspo,xi2025bapo}.
A monotonic performance guarantee similar to Theorem~\ref{thr:soft_guarantee} is provided in Appendix~\ref{app:proof_asy_guarantee}.
Moreover, when $\lambda\rightarrow0$, we also have $\mu'_{\pi_0}(s)$ the exact $\frac{c_h-1}{c_h-c_l}$-quantile in many cases, similar to Remark~\ref{rmk:unreg}.
Notably, when $c_l=0$, $\lambda\rightarrow 0$, and the $\frac{c_h-1}{c_h-c_l}$-quantile exists for $Q_{\pi_0}(s, a)$, we have $\pi^*(a|s)=c_h \pi_0(a|s)$ for $Q_{\pi_0}(s,a)>\mu'_{\pi_0}(s)$ and $0$ otherwise.
This recovers a cross-entropy method (CEM) when $\pi_0$ is uniform, where the optimal solution at each iteration assigns non-zero probability to the top fraction of $\frac{1-c_l}{c_h-c_l}$ samples.

\section{Experiments}
\label{sec:exp}
\looseness -1 In this section, we present extensive experiments to validate the proposed BPO algorithm. 
We first benchmark its performance against PPO across standard MuJoCo and Atari environments (Section~\ref{sec:exp_classic}). 
To assess scalability, we evaluate BPO within NVIDIA IsaacLab~\cite{mittal2025isaac}, a high-throughput simulation platform capable of simulating \emph{thousands of} parallel environments for real-world robotic policy training. 
Furthermore, we apply our GBPO variant to LLM fine-tuning tasks, and compare it directly against GRPO (Section~\ref{sec:exp_LLM}). 
Then, we dive deeper into the analysis of the ratio statistics during training, connecting it to the performance gap between BPO and PPO.
Finally, we conduct an ablation study to analyze the sensitivity of training performance to key components, including the loss function, the $\lambda$ parameter, and various loss coefficients.
All hyperparameters are detailed in Appendix~\ref{sec:hyper}

\subsection{Benchmarking with Classical Environments}
\label{sec:exp_classic}

\looseness -1 We compare the performance of BPO and PPO in classical environments. 
For these experiments, BPO was implemented within the Stable Baselines3 framework, with hyperparameters for all baseline algorithms sourced from RL-Zoo~\cite{rl-zoo3}.
As shown in Figure~\ref{fig:exp_classic}, BPO performs competitively with or superior to PPO across a range of classical benchmarks. 
Specifically, in MuJoCo tasks, BPO achieves clear performance gains in the Ant-v4, Hopper-v4, and Humanoid-v4 environments.
Training on Humanoid-v4 exhibits high variance for BPO, characterized by significant performance divergence across random seeds. 
Both PPO and BPO struggle to achieve peak performance in this environment, primarily due to limited sample efficiency.
However, as demonstrated in Section \ref{sec:exp_isaaclab}, both methods successfully solve more complex humanoid tasks when provided with sufficient samples. 
In Atari benchmarks, BPO generally matches PPO’s performance, notably outperforming it in the Asterix environment. 

We report our benchmarking results against off-policy baselines in MuJoCo and Atari environments in Table~\ref{tab:exp_classic}.
While SAC~\cite{haarnoja2018soft} outperforms both PPO and BPO in the Ant-v4 and Humanoid-v4 tasks, it fails to achieve competitive results in Swimmer-v4. 
In contrast, BPO consistently outperforms PPO in the Ant-v4, Humanoid-v4, and Hopper-v4 environments while remaining competitive in Swimmer-v4.
Both BPO and PPO consistently outperform DQN~\cite{mnih2013playing} in Atari benchmarks.

\begin{figure*}[t] 
    
    \centering %
    
    \includegraphics[width=1.00\textwidth]{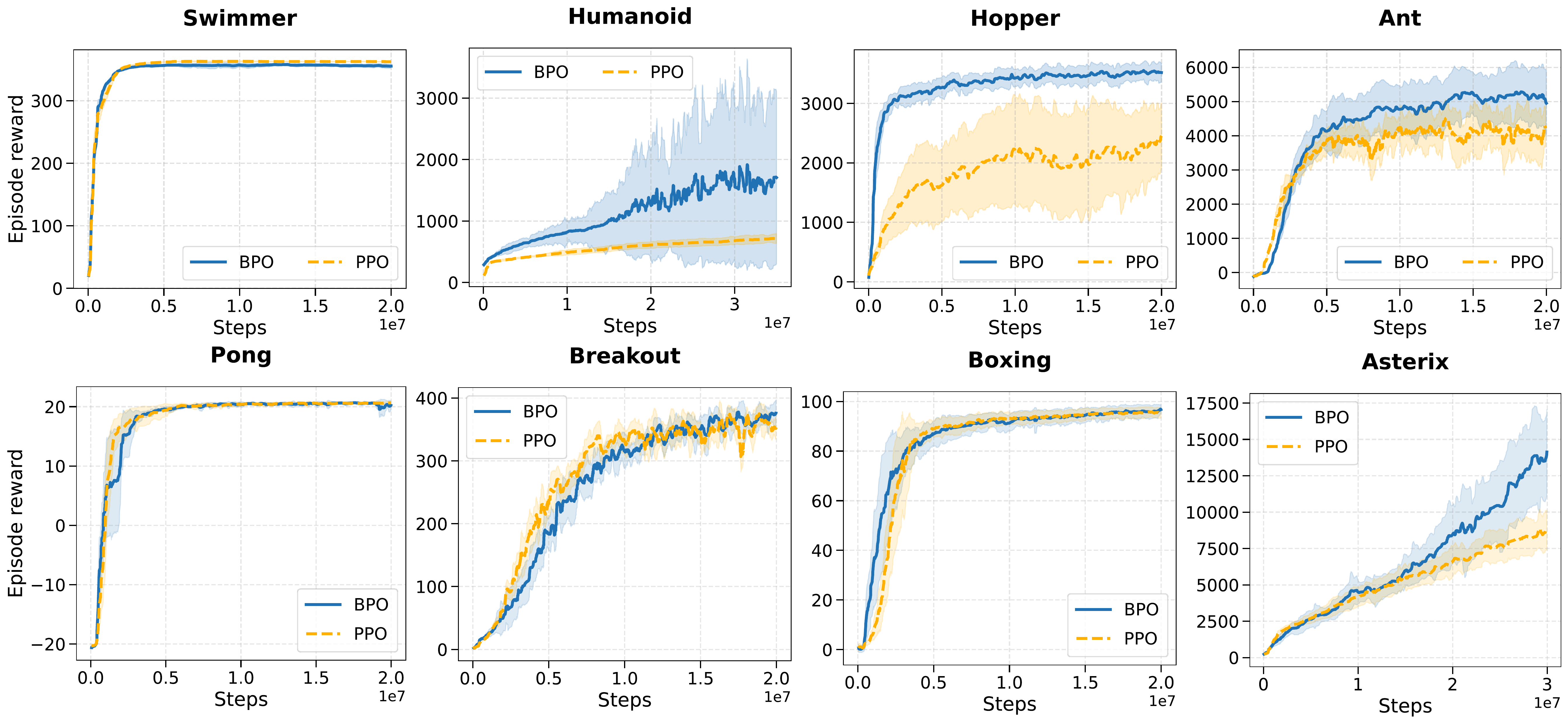} 
    
    \caption{\looseness -1 BPO versus PPO on MuJoCo and Atari environments.
    Shaded regions represent the standard deviation across 10 random seeds.
    In most environments, BPO matches or outperforms PPO.
    }
    \label{fig:exp_classic}
\end{figure*}

\begin{table}[!hbtp]
\footnotesize
\centering
\caption{Comparison of converged total rewards between BPO, PPO, and off-policy algorithms.
Bolded and underlined numbers indicate the highest and second-highest results across all tested algorithms.
For AsterixNoFrameskip-v4, the algorithms are evaluated after the same wall-clock time (12h).
Rewards in other environments are evaluated after convergence.
}
\begin{tabular}{cccc|cccc}
\hline
Mujoco Envs & BPO             & PPO            & SAC             & Atari Envs             & BPO & PPO   & DQN   \\ \hline
Ant-v4      & \ul {4871.4}    & 4230.1         & \textbf{6161.8} & BreakoutNoFrameskip-v4 &  \textbf{374.6}   & \ul{360.4} & 252.5 \\
Humanoid-v4 & \ul {1570.4}    & 781.3          & \textbf{6806.4} & PongNoFrameskip-v4     &  \textbf{20.6}   & \textbf{20.6}  & \textbf{20.6}  \\
Hopper-v4   & \textbf{3505.1} & 2497.7         & \ul {3015.1}    & BoxingNoFrameskip-v4   &   \ul {94.7}  & \textbf{95.7}  & 92.5  \\
Swimmer-v4  & \ul{354.6}     & \textbf{362.4} & 102.7   & AsterixNoFrameskip-v4   &  \textbf{11247.9}  &  \ul{9471.5}   &  7122.8  \\ \hline
\end{tabular}
\label{tab:exp_classic}
\end{table}

\subsection{Benchmarking with IsaacLab Environments}
\label{sec:exp_isaaclab}

In this section, we evaluate the scalability and performance of BPO relative to PPO within the IsaacLab simulation platform. 
We focus on four challenging locomotion tasks on rough terrain: Go1-rough, Anymal-C-rough, G1-rough, and H1-rough, which require the agents (quadrupeds like Unitree Go1 and Anymal-C or humanoids like Unitree G1 and H1) to maintain stable gaits while tracking target velocities across rough surfaces. 
Both BPO and PPO were implemented using the RSL-RL framework, utilizing a large-scale parallelization of 4,096 environments per task. 

\begin{figure*}[t] 
    
    \centering %
    
    \includegraphics[width=0.98\textwidth]{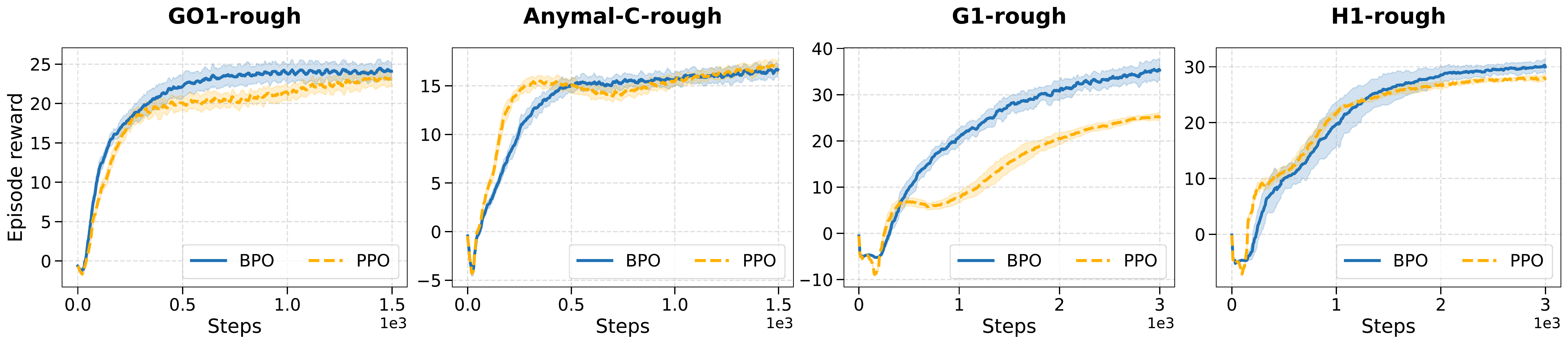} 
    
    \caption{BPO versus PPO on IsaacLab environments.
    Shaded regions represent standard deviation across 5 random seeds.
    BPO substantially outperforms the baseline on challenging humanoid locomotion tasks while exhibiting more stable training dynamics.
    }
    \label{fig:exp_isaaclab}
\end{figure*}

The results in Figure~\ref{fig:exp_isaaclab} demonstrate that BPO is highly effective in complex robotic locomotion tasks. 
In particular, on G1-rough, BPO significantly outperforms the baseline to reach a higher performance ceiling. 
For the Go1-rough and H1-rough environment, BPO also slightly exceeds the final performance of PPO. 
Notably, across all four benchmarks, BPO exhibits enhanced training stability and smoother dynamics compared to the PPO baseline.

\begin{wrapfigure}[18]{r}{0.6\textwidth}
    \includegraphics[width=0.6\textwidth]{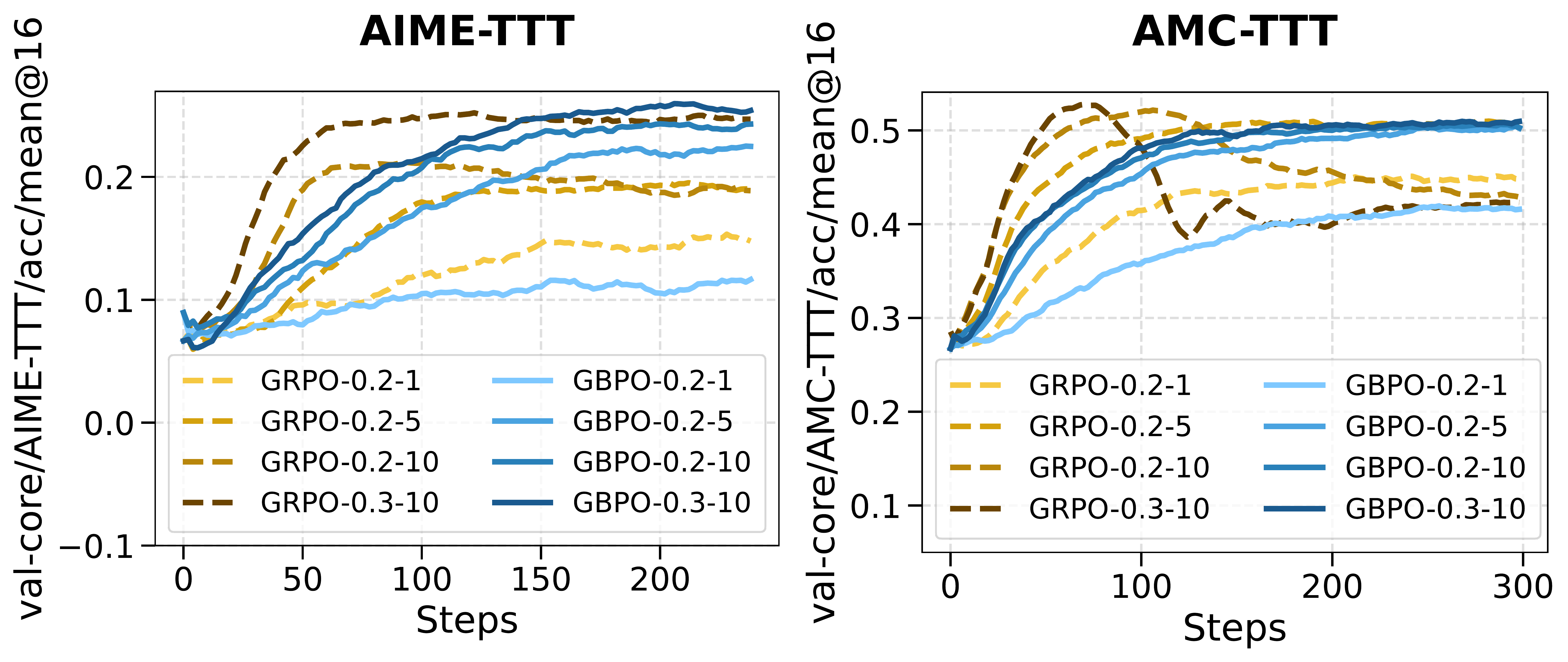} 
    \caption{\looseness -1 Performance of GRPO (green) and GBPO (blue) for fine-tuning Qwen2.5-Math-1.5B on AIME-TTT and AMC-TTT benchmarks.
    In the legend, the first and second numbers denote the clip ratio and the number of epochs, respectively.
    }\label{fig:exp_TTRL}
\end{wrapfigure}

\subsection{LLM Fine-Tuning with GBPO}
\label{sec:exp_LLM}
\looseness -1 We further evaluate GBPO against GRPO for large language model fine-tuning (Section~\ref{sec:gbpo}). 
Specifically, we conduct experiments in the Test-Time Reinforcement Learning (TTRL,~\cite{zuo2025ttrl}) framework, fine-tuning the Qwen2.5-Math-1.5B model with GBPO and GRPO on the AIME-TTT and AMC-TTT benchmarks, and then compare their reasoning performance.
The empirical results, illustrated in Figure~\ref{fig:exp_TTRL}, reveal that GBPO can maintain performance gains as the number of training epochs and clip ratio increase. 
Conversely, GRPO exhibits instability under these conditions.
These findings highlight GBPO’s potential as a more robust and stable alternative for the fine-tuning of large-scale models.

\subsection{Ratio Statistics Analysis}

\begin{figure*}[!htbp] 
    
    \centering %
    
    \includegraphics[width=1.0\textwidth]{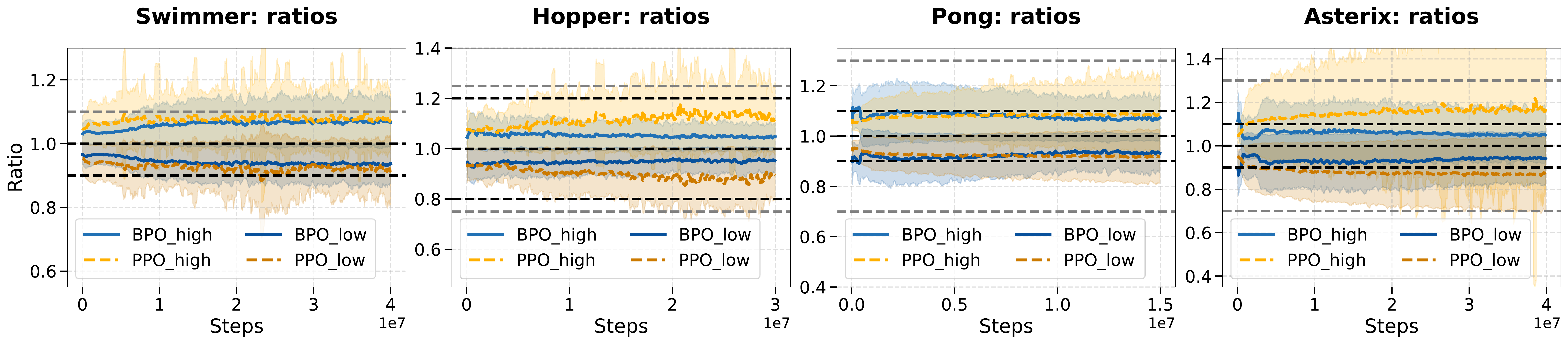} 
    
    \caption{Analysis of ratio statistics.
    During the training process, we draw statistics of ratios ($\pi(a|s)/\pi_0(a|s)$) above and below 1.0 separately, corresponding to BPO/PPO\_high and BPO/PPO\_low.
    Solid lines and shaded regions represent the mean and standard deviations across 5 random seeds.
    Dashed black lines highlight $1.0$ and clipped ranges for PPO; dashed gray lines show clipped ranges for BPO.
    }
    \label{fig:exp_analysis}
\end{figure*}

We analyze the statistics of importance weights (ratio $\pi(a|s)/\pi_0(a|s)$) during the training process.
In MuJoCo environments (using the stable-baselines3 implementation), BPO maintains more stable ratio distributions than PPO, as illustrated in Figure~\ref{fig:exp_analysis}. 
This difference in stability is more obvious in environments where BPO outperforms PPO (e.g., Hopper and Asterix).
\begin{wrapfigure}[15]{r}{0.6\textwidth}
    \includegraphics[width=0.6\textwidth]{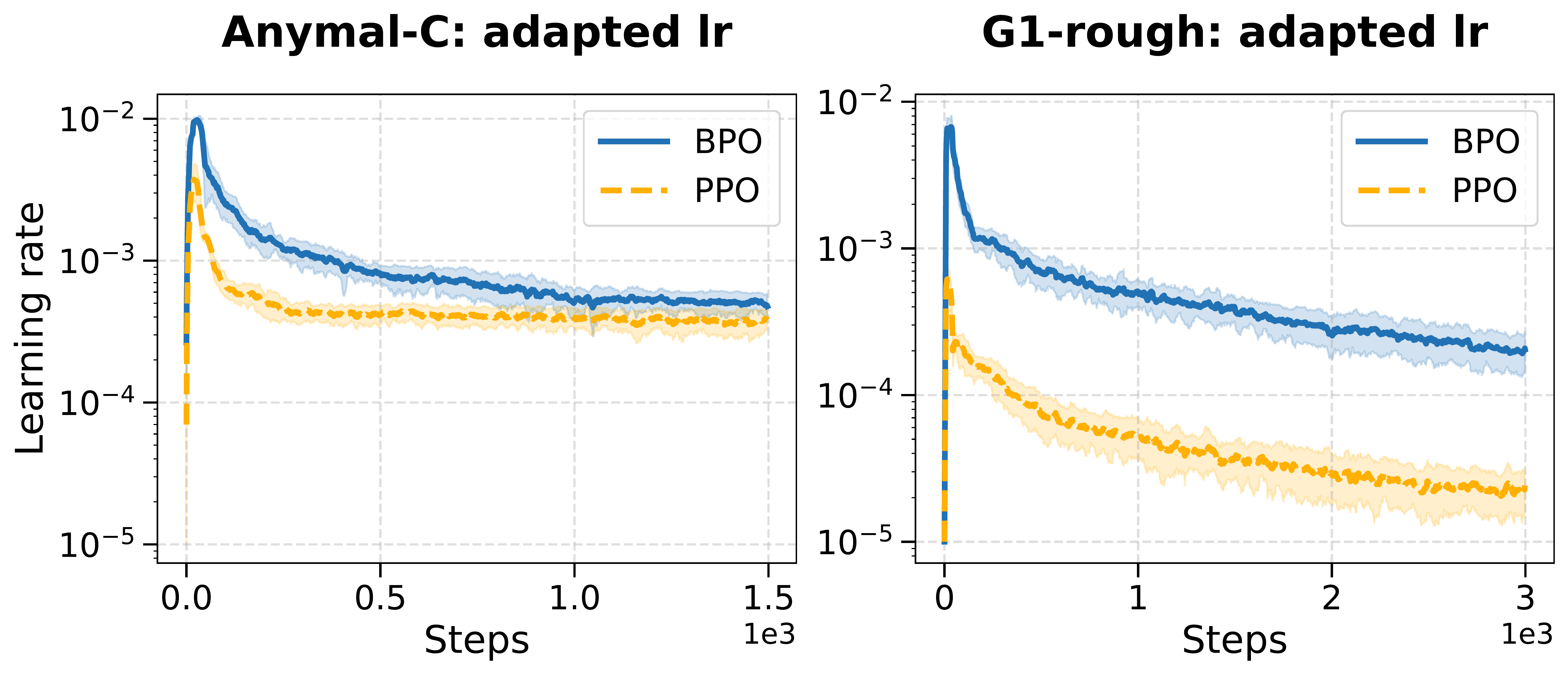} 
    \caption{Adapted learning rates to match the target KL divergence in RSL-RL implementation.}\label{fig:exp_adp_lr}
\end{wrapfigure}

In IsaacLab environments (utilizing RSL-RL), learning rates are dynamically adjusted to maintain a target KL divergence. 
As shown in Figure~\ref{fig:exp_adp_lr}, the adapted learning rates for PPO are often lower than those for BPO, suggesting more aggressive ratio updates that surpass the target KL divergence more frequently.
The scales of the learning rates differ more in tasks where BPO shows a clear performance improvement (e.g., G1-rough).
These findings suggest a strong correlation between the stability of ratio distributions and overall algorithmic performance. 
By effectively enforcing this stability, BPO allows for more stable performance improvement.

\subsection{Ablation Study}
\label{sec:exp_ablation}

This section presents an ablation study of the impact of the value function, loss function, $\lambda$, and the coefficient of TV loss on the performance of the policy, within the G1-rough environment.

\begin{figure}[!hbtp] 
    
    \centering %
    
    \includegraphics[width=1.0\textwidth]{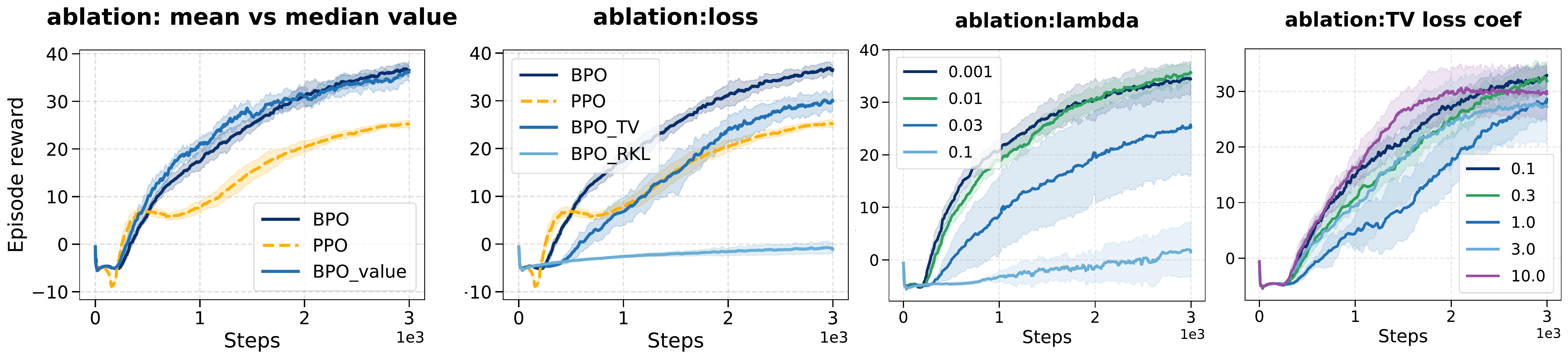} 
    
    \caption{\looseness -1 Ablation study of BPO components in G1-rough environment.
    Shaded regions represent standard deviation across 10 random seeds.
    From left to right: ablation of mean vs median value functions, loss functions, regularization weight $\lambda$ and total variation (TV) weight $\alpha_1$.
    The mean-value variant achieves performance comparable to the median-value version (BPO\_value).
    The advantage-weighted total variation (ATV) loss provides better performance compared to TV and reverse KL divergence (RKL).
    In general, smaller $\lambda$ values lead to better performance.
    In practice, including the TV loss does not improve results.
    }
    \label{fig:ablation_loss}
\end{figure}

\textbf{Mean vs median value function.}
We evaluate the performance of the algorithm by substituting median advantages $\Tilde{A}_{\pi_0}$ with the mean advantage ${A}_{\pi_0}$. 
As illustrated in the left panel of Figure~\ref{fig:ablation_loss}, this simplification achieves performance comparable to the original BPO. 
This robustness likely stems from the low practical differences between median and mean values, caused by the specific return distribution and inherent value estimation errors. 
These results also suggest that this median-to-mean value simplification offers a compelling alternative when the computational overhead of learning the median value is high.

\textbf{Divergence function ablation.}
As illustrated in the middle-left panel of Figure~\ref{fig:ablation_loss}, the ATV loss yields superior performance in the G1-rough environment. 
While the standard TV loss facilitates some learning, it fails to match the asymptotic performance of ATV. 
Conversely, KL divergence proves ineffective and fails to achieve successful policy convergence. 

\textbf{Sensitivity to $\lambda$.}
\looseness -1 We conduct a hyperparameter sweep for $\lambda$ in the G1-rough environment. As shown in the middle-right panel of Figure~\ref{fig:ablation_loss} , smaller values of $\lambda$ generally lead to strong performance. 
Specifically, increasing $\lambda$ from $10^{-3}$ to $10^{-2}$ may slightly improve asymptotic performance, but at the cost of a reduced convergence rate. 
Conversely, excessively large values of $\lambda$ prevent the learning process entirely.

\textbf{Impact of TV loss regularization.}
We study the effect of the TV loss coefficient by incrementally increasing its weight relative to the ATV loss in the G1-rough environment. 
Although Corollary~\ref{coro:loss_tv} suggests that both terms contribute to performance gains, the results in the right panel of Figure~\ref{fig:ablation_loss} indicate that explicitly adding a TV loss component does not improve performance in practice. 

\section{Conclusion}
We introduced Bounded Ratio Reinforcement Learning (BRRL), a framework 
for %
policy optimization under bounded ratio constraints. We showed that the underlying optimization problem admits an analytic solution. 
Our main finding is that this optimal solution allows interpreting the PPO loss from a new perspective, connects to the cross-entropy method (CEM), and motivates a \emph{theoretically grounded} variant, Bounded Policy Optimization (BPO). 
Empirically, BPO is consistently effective across a broad range of tasks, including robotic control and large-model fine-tuning. 
Despite the extensive evaluation with standard RL benchmarks, extending the experiments towards a broader range of LLM fine-tuning tasks remains a compelling future direction. 
Other future research directions include enhancing sample efficiency via advanced exploration, extending the framework to constrained MDPs, and adapting the algorithm for fine-tuning generative policies.

\section{Acknowledgment}
This work is in part supported by the Hasler Foundation
("Learn to learn safely" project, grant number: 21039), 
Swiss National Science Foundation under NCCR Automation, grant agreement 51NF40 180545, and the ETH AI Center.

\bibliography{example_paper}

\begin{thebibliography}{10}

\bibitem{akgul2025overcoming}
Abdullah Akg{\"u}l, Gulcin Baykal, Manuel Hau{\ss}mann, and Melih Kandemir.
\newblock Overcoming non-stationary dynamics with evidential proximal policy optimization.
\newblock {\em arXiv preprint arXiv:2503.01468}, 2025.

\bibitem{andrychowicz2020learning}
OpenAI:~Marcin Andrychowicz, Bowen Baker, Maciek Chociej, Rafal Jozefowicz, Bob McGrew, Jakub Pachocki, Arthur Petron, Matthias Plappert, Glenn Powell, Alex Ray, et~al.
\newblock Learning dexterous in-hand manipulation.
\newblock {\em The International Journal of Robotics Research}, 39(1):3--20, 2020.

\bibitem{cobbe2021phasic}
Karl~W Cobbe, Jacob Hilton, Oleg Klimov, and John Schulman.
\newblock Phasic policy gradient.
\newblock In {\em International Conference on Machine Learning}, pages 2020--2027. PMLR, 2021.

\bibitem{doering2026approximate}
Leif Doering, Daniel Schmidt, Moritz Melcher, Sebastian Kassing, Benedikt Wille, Tilman Aach, and Simon Weissmann.
\newblock An approximate ascent approach to prove convergence of ppo.
\newblock {\em arXiv preprint arXiv:2602.03386}, 2026.

\bibitem{engstrom2020implementation}
Logan Engstrom, Andrew Ilyas, Shibani Santurkar, Dimitris Tsipras, Firdaus Janoos, Larry Rudolph, and Aleksander Madry.
\newblock Implementation matters in deep policy gradients: A case study on ppo and trpo.
\newblock {\em arXiv preprint arXiv:2005.12729}, 2020.

\bibitem{fakoor2020p3o}
Rasool Fakoor, Pratik Chaudhari, and Alexander~J Smola.
\newblock P3o: Policy-on policy-off policy optimization.
\newblock In {\em Uncertainty in artificial intelligence}, pages 1017--1027. PMLR, 2020.

\bibitem{haarnoja2018soft}
Tuomas Haarnoja, Aurick Zhou, Pieter Abbeel, and Sergey Levine.
\newblock Soft actor-critic: Off-policy maximum entropy deep reinforcement learning with a stochastic actor.
\newblock In {\em International conference on machine learning}, pages 1861--1870. PMLR, 2018.

\bibitem{kobayashi2021proximal}
Taisuke Kobayashi.
\newblock Proximal policy optimization with relative pearson divergence.
\newblock In {\em 2021 IEEE International Conference on Robotics and Automation (ICRA)}, pages 8416--8421. IEEE, 2021.

\bibitem{kullback1951information}
Solomon Kullback and Richard~A Leibler.
\newblock On information and sufficiency.
\newblock {\em The annals of mathematical statistics}, 22(1):79--86, 1951.

\bibitem{landau1980statistical}
L.~D. Landau, E.~M. Lifshitz, and L.~P. Pitaevskii.
\newblock {\em Statistical Physics: Theory of the Condensed State}, volume~9 of {\em Course of Theoretical Physics}.
\newblock Butterworth-Heinemann, Oxford, 1980.

\bibitem{lee2020learning}
Joonho Lee, Jemin Hwangbo, Lorenz Wellhausen, Vladlen Koltun, and Marco Hutter.
\newblock Learning quadrupedal locomotion over challenging terrain.
\newblock {\em Science robotics}, 5(47):eabc5986, 2020.

\bibitem{liu2019neural}
Boyi Liu, Qi~Cai, Zhuoran Yang, and Zhaoran Wang.
\newblock Neural trust region/proximal policy optimization attains globally optimal policy.
\newblock {\em Advances in neural information processing systems}, 32, 2019.

\bibitem{miki2022learning}
Takahiro Miki, Joonho Lee, Jemin Hwangbo, Lorenz Wellhausen, Vladlen Koltun, and Marco Hutter.
\newblock Learning robust perceptive locomotion for quadrupedal robots in the wild.
\newblock {\em Science robotics}, 7(62):eabk2822, 2022.

\bibitem{milosevic2025central}
Nikola Milosevic, Johannes M{\"u}ller, and Nico Scherf.
\newblock Central path proximal policy optimization.
\newblock {\em arXiv preprint arXiv:2506.00700}, 2025.

\bibitem{mittal2025isaac}
Mayank Mittal, Pascal Roth, James Tigue, Antoine Richard, Octi Zhang, Peter Du, Antonio Serrano-Mu{\~n}oz, Xinjie Yao, Ren{\'e} Zurbr{\"u}gg, Nikita Rudin, et~al.
\newblock Isaac lab: A gpu-accelerated simulation framework for multi-modal robot learning.
\newblock {\em arXiv preprint arXiv:2511.04831}, 2025.

\bibitem{mnih2013playing}
Volodymyr Mnih, Koray Kavukcuoglu, David Silver, Alex Graves, Ioannis Antonoglou, Daan Wierstra, and Martin Riedmiller.
\newblock Playing atari with deep reinforcement learning.
\newblock {\em arXiv preprint arXiv:1312.5602}, 2013.

\bibitem{ouyang2022training}
Long Ouyang, Jeffrey Wu, Xu~Jiang, Diogo Almeida, Carroll Wainwright, Pamela Mishkin, Chong Zhang, Sandhini Agarwal, Katarina Slama, Alex Ray, et~al.
\newblock Training language models to follow instructions with human feedback.
\newblock {\em Advances in neural information processing systems}, 35:27730--27744, 2022.

\bibitem{qi2026rethinking}
Penghui Qi, Xiangxin Zhou, Zichen Liu, Tianyu Pang, Chao Du, Min Lin, and Wee~Sun Lee.
\newblock Rethinking the trust region in llm reinforcement learning.
\newblock {\em arXiv preprint arXiv:2602.04879}, 2026.

\bibitem{radosavovic2024real}
Ilija Radosavovic, Tete Xiao, Bike Zhang, Trevor Darrell, Jitendra Malik, and Koushil Sreenath.
\newblock Real-world humanoid locomotion with reinforcement learning.
\newblock {\em Science Robotics}, 9(89):eadi9579, 2024.

\bibitem{rl-zoo3}
Antonin Raffin.
\newblock Rl baselines3 zoo.
\newblock \url{https://github.com/DLR-RM/rl-baselines3-zoo}, 2020.

\bibitem{rubinstein1999cross}
Reuven Rubinstein.
\newblock The cross-entropy method for combinatorial and continuous optimization.
\newblock {\em Methodology and computing in applied probability}, 1(2):127--190, 1999.

\bibitem{schulman2015trust}
John Schulman, Sergey Levine, Pieter Abbeel, Michael Jordan, and Philipp Moritz.
\newblock Trust region policy optimization.
\newblock In {\em International conference on machine learning}, pages 1889--1897. PMLR, 2015.

\bibitem{schulman2015high}
John Schulman, Philipp Moritz, Sergey Levine, Michael Jordan, and Pieter Abbeel.
\newblock High-dimensional continuous control using generalized advantage estimation.
\newblock {\em arXiv preprint arXiv:1506.02438}, 2015.

\bibitem{schulman2017proximal}
John Schulman, Filip Wolski, Prafulla Dhariwal, Alec Radford, and Oleg Klimov.
\newblock Proximal policy optimization algorithms.
\newblock {\em arXiv preprint arXiv:1707.06347}, 2017.

\bibitem{schwarke2025rsl}
Clemens Schwarke, Mayank Mittal, Nikita Rudin, David Hoeller, and Marco Hutter.
\newblock Rsl-rl: A learning library for robotics research.
\newblock {\em arXiv preprint arXiv:2509.10771}, 2025.

\bibitem{serrano2023skrl}
Antonio Serrano-Munoz, Dimitrios Chrysostomou, Simon B{\o}gh, and Nestor Arana-Arexolaleiba.
\newblock skrl: Modular and flexible library for reinforcement learning.
\newblock {\em Journal of Machine Learning Research}, 24(254):1--9, 2023.

\bibitem{shao2024deepseekmath}
Zhihong Shao, Peiyi Wang, Qihao Zhu, Runxin Xu, Junxiao Song, Xiao Bi, Haowei Zhang, Mingchuan Zhang, YK~Li, Yang Wu, et~al.
\newblock Deepseekmath: Pushing the limits of mathematical reasoning in open language models.
\newblock {\em arXiv preprint arXiv:2402.03300}, 2024.

\bibitem{silver2017mastering}
David Silver, Julian Schrittwieser, Karen Simonyan, Ioannis Antonoglou, Aja Huang, Arthur Guez, Thomas Hubert, Lucas Baker, Matthew Lai, Adrian Bolton, et~al.
\newblock Mastering the game of go without human knowledge.
\newblock {\em Nature}, 550(7676):354--359, 2017.

\bibitem{tan2024beyond}
Charlie~B Tan, Edan Toledo, Benjamin Ellis, Jakob~N Foerster, and Ferenc Husz{\'a}r.
\newblock Beyond the boundaries of proximal policy optimization.
\newblock {\em arXiv preprint arXiv:2411.00666}, 2024.

\bibitem{wang2025aspo}
Jiakang Wang, Runze Liu, Lei Lin, Wenping Hu, Xiu Li, Fuzheng Zhang, Guorui Zhou, and Kun Gai.
\newblock Aspo: Asymmetric importance sampling policy optimization.
\newblock {\em arXiv preprint arXiv:2510.06062}, 2025.

\bibitem{wang2020truly}
Yuhui Wang, Hao He, and Xiaoyang Tan.
\newblock Truly proximal policy optimization.
\newblock In {\em Uncertainty in artificial intelligence}, pages 113--122. PMLR, 2020.

\bibitem{wang2019trust}
Yuhui Wang, Hao He, Xiaoyang Tan, and Yaozhong Gan.
\newblock Trust region-guided proximal policy optimization.
\newblock {\em Advances in Neural Information Processing Systems}, 32, 2019.

\bibitem{xi2025bapo}
Zhiheng Xi, Xin Guo, Yang Nan, Enyu Zhou, Junrui Shen, Wenxiang Chen, Jiaqi Liu, Jixuan Huang, Zhihao Zhang, Honglin Guo, et~al.
\newblock Bapo: Stabilizing off-policy reinforcement learning for llms via balanced policy optimization with adaptive clipping.
\newblock {\em arXiv preprint arXiv:2510.18927}, 2025.

\bibitem{xie2024simple}
Zhengpeng Xie, Qiang Zhang, and Renjing Xu.
\newblock Simple policy optimization.
\newblock {\em arXiv preprint arXiv:2401.16025}, 2024.

\bibitem{ye2020mastering}
Deheng Ye, Zhao Liu, Mingfei Sun, Bei Shi, Peilin Zhao, Hao Wu, Hongsheng Yu, Shaojie Yang, Xipeng Wu, Qingwei Guo, et~al.
\newblock Mastering complex control in moba games with deep reinforcement learning.
\newblock In {\em Proceedings of the AAAI conference on artificial intelligence}, volume~34, pages 6672--6679, 2020.

\bibitem{zuo2025ttrl}
Yuxin Zuo, Kaiyan Zhang, Li~Sheng, Shang Qu, Ganqu Cui, Xuekai Zhu, Haozhan Li, Yuchen Zhang, Xinwei Long, Ermo Hua, et~al.
\newblock Ttrl: Test-time reinforcement learning.
\newblock {\em arXiv preprint arXiv:2504.16084}, 2025.

\end{thebibliography}
\bibliographystyle{plain}

\newpage
\appendix
\onecolumn
\section{Appendix}
\subsection{Code Availability}
Below are the links to our project website and source code.

\textbf{Project website:} \url{https://bounded-ratio-rl.github.io/brrl/}.

\textbf{Code:} \url{https://github.com/bounded-ratio-rl/bounded_ratio_rl}.

\subsection{Proof of Theorem~\ref{thr:conservative}}
\label{sec:proof_policy}
\begin{proof}
Since $\forall \,s,\,\mathbb{E}_{a\sim\pi_0(\cdot|s)}[\rho]=1$, maximizing $L_{\pi_0}(\pi)$ is equivalent to maximizing $L_{\pi_0}(\pi) + \mathbb{E}_{s\sim d_{\pi_0}}[V_{\pi_0}(s)] - \eta(\pi_0)$, which can be expanded as  (from Equation~\eqref{eq:l})
\begin{align*}
    &\eta(\pi_0) + \mathbb{E}_{s\sim d_{\pi_0} +  a\sim\pi_0(\cdot|s)}[\rho A_{\pi_0}(s, a)] + \mathbb{E}_{s\sim d_{\pi_0}}[V_{\pi_0}(s)]- \eta(\pi_0)\\=& \,\mathbb{E}_{s\sim d_{\pi_0}}[V_{\pi_0}(s)\cdot\mathbb{E}_{a\sim\pi_0(\cdot|s)}[\rho]] + \mathbb{E}_{s\sim d_{\pi_0}, a\sim\pi_0(\cdot|s)}[\rho A_{\pi_0}(s, a)] \\= &\,\mathbb{E}_{s\sim d_{\pi_0}, a\sim\pi_0(\cdot|s)}[\rho V_{\pi_0}(s) + \rho A_{\pi_0}(s, a)] = \mathbb{E}_{s\sim d_{\pi_0}, a\sim\pi_0(\cdot|s)}[\rho Q_{\pi_0}(s, a)]
\end{align*}
The original problem~\eqref{eq:conservative} can then be written as
\begin{align*}
    \max_{\rho}\mathbb{E}_{\substack{s\sim d_{\pi_0}\\a\sim\pi_0(\cdot|s)}}[\rho Q_{\pi_0}(s, a) - \lambda(\rho-1+\epsilon)  \log (\rho-1+\epsilon)   -\lambda (1+\epsilon - \rho  )\log(1+\epsilon - \rho )],
\end{align*}
with the normalization constraint
\begin{align*}
    \forall\, s,\,\,\mathbb{E}_{a\sim\pi_0(\cdot|s)}[\rho] = 1.
\end{align*}
This optimization problem can be decomposed into subproblems for each state $s$.
Now, given a fixed state $s$, we solve the constrained optimization problem using the Lagrangian approach, with a Lagrangian multiplier denoted as $-\mu(s)$
\begin{align*}
   \mathcal{L}(\rho):=&\mathbb{E}_{a\sim{\pi_0(\cdot|s)}}[\rho Q_{\pi_0}(s, a) - \lambda(\rho-1+\epsilon)  \log (\rho-1+\epsilon)   -\lambda (1+\epsilon - \rho  )\log(1+\epsilon - \rho )] \\&- \mu(s) (\mathbb{E}_{a\sim\pi_0(\cdot|s)}[\rho]-1) \\
   =:& \mathbb{E}_{a\sim{\pi_0(\cdot|s)}}[f(\rho)] + \mu(s),
\end{align*}
where
\begin{align*}
    f(\rho):= \rho Q_{\pi_0}(s, a) - \lambda(\rho-1+\epsilon)  \log (\rho-1+\epsilon)   -\lambda (1+\epsilon - \rho  )\log(1+\epsilon - \rho ) - \mu(s)\rho.
\end{align*}
If $\mathcal{A}$ is continuous, we apply the calculus of variations
\begin{equation}
\begin{split}
    & \frac{\partial }{\partial\rho} f(\rho) = 0 \\
    \Rightarrow\quad &\frac{\partial }{\partial\rho}\left(\rho Q_{\pi_0}(s, a) - \lambda(\rho-1+\epsilon)  \log (\rho-1+\epsilon)   -\lambda (1+\epsilon - \rho  )\log(1+\epsilon - \rho ) - \mu(s) \rho\right)=0\\
     \Rightarrow\quad &Q_{\pi_0}(s, a) - \lambda (\log (\rho-1+\epsilon) - \log(1+\epsilon - \rho))- \mu(s) = 0 \\
     \Rightarrow\quad & \log\frac{\rho-1+\epsilon}{1+\epsilon -\rho} = \frac{Q_{\pi_0}(s, a) - \mu(s)}{\lambda} 
     \\
     \Rightarrow\quad& \rho^* = 1 + \frac{\epsilon\exp\left(\frac{Q_{\pi_0}(s, a) - \mu(s)}{\lambda}\right)-\epsilon}{1 + \exp\left(\frac{Q_{\pi_0}(s, a) - \mu(s)}{\lambda}\right)} 
     = 1 + \epsilon \tanh\left(\frac{\tilde{A}_{\pi_0}}{2\lambda}\right)\label{eq:cal_val}
\end{split}
\end{equation}
The second derivative of the objective function can be computed as $-\frac{\lambda}{\rho-1+\epsilon}-\frac{\lambda}{1+\epsilon-\rho}$, which is negative for arbitrary $1-\epsilon<\rho<1+\epsilon$.
Therefore, $\rho^*$ is the maximizer of the objective function.

The Lagrangian multiplier $\mu(s)$ should be chosen to satisfy the normalization constraint:
\begin{align*}
    \mathbb{E}_{a\sim\pi_0(\cdot|s)}\left[1 + \epsilon \tanh\left(\frac{\tilde{A}_{\pi_0}}{2\lambda}\right)\right] = 1\quad \Leftrightarrow\quad\mathbb{E}_{a\sim\pi_0(\cdot|s)}\left[\tanh\left(\frac{\tilde{A}_{\pi_0}}{2\lambda}\right)\right] = 0.\label{eq:cons}
\end{align*}
Note that such $\mu(s)$ always exists because for all $\lambda>0$, $\mathbb{E}_{a\sim\pi_0(\cdot|s)}\left[1 + \epsilon \tanh\left(\frac{\tilde{A}_{\pi_0}}{2\lambda}\right)\right]$ is a smooth function of $\mu$ with the value range between $1-\epsilon$ and $1+\epsilon$.
Now we show that the corresponding $\mu(s)$ is also the minimizer of $\mathbb{E}_{a\sim\pi_0(\cdot|s)}\left[g\left(\frac{Q_{\pi_0}(s, a)-\mu(s)}{\lambda}\right)\right]$, where $g:=\ln(e^{-\frac{x}{2}} + e^{\frac{x}{2}})$:
\begin{align*}
    &\mathbb{E}_{\pi_0}\left[\frac{\partial g}{\partial u}\left(\frac{q-u}{\lambda}\right)\right] = 0 \,\,\Leftrightarrow\,\, \mathbb{E}_{\pi_0}\left[\frac{e^{\frac{q-u}{2\lambda}} - e^{-\frac{q-u}{2\lambda}}}{2(e^{-\frac{q-u}{2\lambda}} + e^{\frac{q-u}{2\lambda}})}\right] = 0  \,\,\Leftrightarrow\,\, \mathbb{E}_{\pi_0}\left[\tanh\left(\frac{q-u}{2\lambda}\right)\right] =0,
\end{align*}
where $\mathbb{E}_{\pi_0}$ abbreviates $\mathbb{E}_{a\sim\pi_0(\cdot|s)}$.
Besides, $$\frac{\partial^2 g}{\partial u^2}=\frac{1}{4\lambda^2}\mathbb{E}_{a\sim\pi_0(\cdot|s)}\left[\frac{(e^{\frac{u-q}{2\lambda}} + e^{-\frac{u-q}{2\lambda}})^2 -(e^{\frac{u-q}{2\lambda}} - e^{-\frac{u-q}{2\lambda}})^2}{(e^{-\frac{u-q}{2\lambda}} + e^{\frac{u-q}{2\lambda}})^2}\right]\geq 0.$$
Therefore, the optimal $\mu(s)$ is the minimizer of $g$.

If $\mathcal{A}$ is discrete, for each fixed state $s$, we denote the vectorized $\rho(a)$, $Q_{\pi_0}(s, a)$, $H(\rho)$ and $\pi_0(a|s)$ as $\rho, Q, H,\pi \in \mathbb{R}^{|\mathcal{A}|}$.
Then the Lagrangian $\mathcal{L}(\rho)$ can be expressed by
\begin{align*}
    \mathcal{L}(\rho) := \pi^\top (\rho \odot Q - \lambda H - \mu \rho) + \mu,
\end{align*}
where $\odot$ denotes elementwise product.
Applying zero gradient w.r.t. $\rho$ gives
\begin{align*}
    (\text{diag}(Q) - \lambda \text{diag}(H') - \mu I)^\top \pi = \mathbf{0} \quad\Rightarrow\quad \pi(a|s) (Q_{\pi_0}(s,a) - \lambda H'(\rho) - \mu(s)) = 0, \,\,\forall \,\, a,
\end{align*}
This gives the same expression as~\eqref{eq:cal_val}, therefore, the following proof steps are the same as the continuous case.
\end{proof}

\subsection{Proof of Theorem~\ref{thr:soft_guarantee}}
\label{sec:proof_mono}

We start by proving the following Lemma:
\begin{lemma}
    Define $L_{\pi_0}^{\pi^*}(s):=\mathbb{E}_{a\sim \pi^*(\cdot|s)}[Q_{\pi_0}(s, a)]$, consider $\pi^*(a|s)=\left[1+\epsilon \tanh\left(\frac{\tilde{A}_{\pi_0}}{2\lambda}\right)\right]\pi_0(a|s)$ from Theorem~\ref{thr:conservative}, we have
    \begin{align*}
        L_{\pi_0}^{\pi^*}(s) = V_{\pi_0}(s) + \epsilon\mathbb{E}_{a\sim \pi_0(\cdot|s)}\left[\tanh\left(\frac{\Tilde{A}_{\pi_0}(s, a)}{2\lambda}\right)\Tilde{A}_{\pi_0}(s, a)\right].
    \end{align*} \label{lem:l}
\end{lemma}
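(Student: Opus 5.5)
The identity follows from writing the expectation under $\pi^*$ as an importance-weighted expectation under $\pi_0$, and then using the normalization condition on the soft-median $\mu_{\pi_0}(s)$ from Theorem~\ref{thr:conservative} to trade $Q_{\pi_0}$ for $\tilde{A}_{\pi_0}$.

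First, substitute the explicit form of $\pi^*$:
\begin{align*}
L_{\pi_0}^{\pi^*}(s)=\mathbb{E}_{a\sim\pi_0(\cdot|s)}\left[\left(1+\epsilon\tanh\left(\frac{\tilde{A}_{\pi_0}}{2\lambda}\right)\right)Q_{\pi_0}(s,a)\right].
\end{align*}
This splits into two terms. The first is $\mathbb{E}_{a\sim\pi_0}[Q_{\pi_0}(s,a)]=V_{\pi_0}(s)$, by the definition of the value function. The second is $\epsilon\,\mathbb{E}_{a\sim\pi_0}[\tanh(\tilde{A}_{\pi_0}/2\lambda)\,Q_{\pi_0}(s,a)]$.

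Next, in the second term write $Q_{\pi_0}(s,a)=\tilde{A}_{\pi_0}(s,a)+\mu_{\pi_0}(s)$. Since $\mu_{\pi_0}(s)$ does not depend on $a$, it factors out of the inner expectation and multiplies $\mathbb{E}_{a\sim\pi_0}[\tanh(\tilde{A}_{\pi_0}/2\lambda)]$. By the defining property of the soft-median in Theorem~\ref{thr:conservative}, this expectation is zero. Only $\epsilon\,\mathbb{E}_{a\sim\pi_0}[\tanh(\tilde{A}_{\pi_0}/2\lambda)\tilde{A}_{\pi_0}]$ survives, which is exactly the claimed formula.

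There is no real obstacle here; the one step that matters is recognizing that the multiplier $\mu_{\pi_0}(s)$ is precisely what makes $\pi^*$ normalized, so the baseline shift costs nothing. A minor technical point is integrability: $\tanh$ is bounded, so the expectations are finite whenever $Q_{\pi_0}(s,\cdot)$ is integrable under $\pi_0(\cdot|s)$. We assume this implicitly, as the paper does throughout.
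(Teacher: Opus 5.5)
Your proposal is correct and matches the paper's proof essentially step for step. You rewrite the $\pi^*$-expectation as a $\pi_0$-expectation weighted by $1+\epsilon\tanh(\tilde{A}_{\pi_0}/2\lambda)$, split off $V_{\pi_0}(s)$, substitute $Q_{\pi_0}(s,a)=\mu_{\pi_0}(s)+\tilde{A}_{\pi_0}(s,a)$, and eliminate the $\mu_{\pi_0}(s)$ term using the normalization condition $\mathbb{E}_{a\sim\pi_0(\cdot|s)}[\tanh(\tilde{A}_{\pi_0}/2\lambda)]=0$ (your integrability remark is a harmless addition the paper leaves implicit).
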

\begin{proof}
    We directly compute $L_{\pi_0}^{\pi^*}(s)$ as
    \begin{align*}
        &L_{\pi_0}^{\pi^*}(s) = \mathbb{E}_{\pi_0}\left[\left(1 + \epsilon\tanh\left(\frac{\tilde{A}_{\pi_0}(s,a)}{2\lambda}\right)\right)Q_{\pi_0}(s, a)\right] \\
        &=\underbrace{\mathbb{E}_{\pi_0}[Q_{\pi_0}(s, a)]}_{V_{\pi_0}(s)} + \epsilon \mathbb{E}_{\pi_0}\left[\tanh\left(\frac{\tilde{A}_{\pi_0}(s,a)}{2\lambda}\right)Q_{\pi_0}(s,a)\right] \\
        &=V_{\pi_0}(s) + \epsilon \mathbb{E}_{\pi_0}\left[\tanh\left(\frac{\tilde{A}_{\pi_0}(s,a)}{2\lambda}\right)\cdot(\mu(s) + \tilde{A}_{\pi_0}(s, a))\right] \\
        & = V_{\pi_0}(s) + \epsilon \mathbb{E}_{\pi_0}\left[\tanh\left(\frac{\tilde{A}_{\pi_0}(s,a)}{2\lambda}\right)\tilde{A}_{\pi_0}(s, a)\right]
        + \epsilon \underbrace{\mathbb{E}_{\pi_0}\left[\tanh\left(\frac{\tilde{A}_{\pi_0}(s,a)}{2\lambda}\right)\right]}_{=0}\mu(s).
    \end{align*}
    where we abbreviate $\mathbb{E}_{a\sim \pi_0(\cdot|s)}$ with $\mathbb{E}_{\pi_0}$.
    We have $\mathbb{E}_{\pi_0}\left[\tanh\left(\frac{\tilde{A}_{\pi_0}(s,a)}{2\lambda}\right)\right]=0$ because of the normalization constraint for $\mu$ in Theorem~\ref{thr:conservative}.
\end{proof}

We now prove Theorem~\ref{thr:soft_guarantee} using matrix representations for MDP with a discrete state and action space. The results for continuous spaces can be extended by using linear operators other than matrices.
\begin{proof}(Theorem~\ref{thr:soft_guarantee})
   We define $r_{\pi}\in \mathbb{R}^{|\mathcal{S}|}$ with $r_{\pi}(s):=\mathbb{E}_{a\sim \pi(\cdot|s),s'\sim P(s'|s, a)}[r(s, a, s')]$.
    The transition kernel $P_\pi \in \mathbb{R}^{|\mathcal{S}|\times|\mathcal{S}|}$ is defined with $P_\pi(s, s'):=\sum_{a\sim \pi(\cdot|s)} \pi(a|s)p(s'|s, a)$.
    We denote $(I-\gamma P_{\pi})^{-1}:=(I + \gamma P_{\pi} +\gamma^2 P_\pi^2 + ...)$.
Then, given the initial distribution denoted as $d\in [0, 1]^{|\mathcal{S}|},d(s):=d_0(s)$, we can express the state visitation distribution $d_\pi \in [0, \frac{1}{1-\gamma}]^{|\mathcal{S}|}$ as $d_\pi^\top = d^\top(I - \gamma P_\pi)^{-1}$.

     Let us define $V_{\pi}\in \mathbb{R}^{|\mathcal{S}|}$, where each component $s$ corresponds to $V_{\pi}(s)$. 
   The definition of the value function implies that
   \begin{equation}
   \begin{split}
       & V_{\pi}= r_\pi + \gamma P_\pi V_\pi \\
       &V_{\pi}=(I - \gamma P_\pi)^{-1}r_\pi. \label{eq:vec_value}
   \end{split}
   \end{equation}
   We also have
   \begin{equation}
       \eta(\pi) = \mathbb{E}_{s\sim d_\pi, a\sim \pi(\cdot|s), s'\sim P(\cdot|s, a)}[r(s,a,s')] = d^\top(I - \gamma P_\pi)^{-1} r_\pi = d^\top V_\pi \label{eq:vec_eta}
   \end{equation}
    
    We then define $L^{\pi_2}_{\pi_1}\in \mathbb{R}^{|\mathcal{S}|}$ with 
    \begin{equation}
   L^{\pi_2}_{\pi_1} = r_{\pi_2} + \gamma P_{\pi_2} V_{\pi_1},\label{eq:vec_surr}
   \end{equation}
    which aligns with the definition of $L_{\pi_0}^{\pi^*}(s)$ in Lemma~\ref{lem:l}.
    Let us denote $B_1 \in \mathbb{R}^{|\mathcal{S}|}$ with $B_1(s) := \mathbb{E}_{a\sim\pi_0(\cdot|s)}\left[\tanh\left(\frac{\tilde{A}_{\pi_0}(s,a)}{2\lambda}\right)\tilde{A}_{\pi_0}(s, a)\right]$.
    Then the state-wise result Lemma~\ref{lem:l} can be rewritten for the full state space as 
    \begin{equation}
   L_{\pi_0}^{\pi^*} = V_{\pi_0} + \epsilon B_1, \label{eq:vec_improve}
   \end{equation} 
   where $B_1$ is positive along each of its components.
    Combining~\eqref{eq:vec_surr} and~\eqref{eq:vec_improve} gives
    \begin{equation}
        r_{\pi^*} + \gamma P_{\pi^*} V_{\pi_0} =: L_{\pi_0}^{\pi^*} =  V_{\pi_0} + \epsilon B_1 \,\,\Rightarrow\,\,r_{\pi^*} = (I - \gamma P_{\pi^*})V_{\pi_0} + \epsilon B_1 \label{eq:vec_r}
    \end{equation}
    On the other hand, applying~\eqref{eq:vec_value} to $\pi^*$ in combination with~\eqref{eq:vec_r} gives
    \begin{equation}
       V_{\pi^*} = (I - \gamma P_{\pi^*})^{-1} r_{\pi^*} =(I - \gamma P_{\pi^*})^{-1}((I - \gamma P_{\pi^*})V_{\pi_0} + \epsilon B_1) = V_{\pi_0} + \epsilon (I - \gamma P_{\pi^*})^{-1} B_1.
    \end{equation}
    Finally, we can obtain from~\eqref{eq:vec_eta}
    \begin{align*}
        \eta(\pi^*) := d^\top V_{\pi^*} = d^\top V_{\pi_0} + \epsilon d^\top(I - \gamma P_{\pi^*})^{-1} B_1 = \eta(\pi_0) + \epsilon \mathbb{E}_{s\sim d_{\pi^*}}[B_1(s)]
    \end{align*}
    Applying the definition of $B_1(s)$ finishes the proof.
\end{proof}

\subsection{Proof of Corollary~\ref{coro:soft_guarantee_true_l}}
\label{sec:proof_mono_true_l}

We start by proving the following Lemma on per-state performance improvement.
    \begin{lemma}
    Define $L_{\pi_0}^{\pi_\theta}(s):=\mathbb{E}_{a\sim \pi_\theta(\cdot|s)}[Q_{\pi_0}(s, a)]$, $D_{\theta}^{ATV}(s):=\sum_a |(\pi_\theta(a|s)-\pi^*(a|s))A_{\pi_0}(s, a)|$, we have
    \begin{align*}
        L_{\pi_0}^{\pi_\theta}(s) \geq V_{\pi_0}(s) + \epsilon\mathbb{E}_{a\sim \pi_0(\cdot|s)}\left[\tanh\left(\frac{\Tilde{A}_{\pi_0}(s, a)}{2\lambda}\right)\Tilde{A}_{\pi_0}(s, a)\right] - D_{\theta}^{ATV}(s)
    \end{align*} \label{lem:l_theta}
\end{lemma}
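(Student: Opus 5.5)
The plan is to compare $L_{\pi_0}^{\pi_\theta}(s)$ with $L_{\pi_0}^{\pi^*}(s)$, which Lemma~\ref{lem:l} already evaluates exactly. The key point is to subtract $V_{\pi_0}(s)$ from $Q_{\pi_0}$ first, so that the approximation error is weighted by the advantage rather than by $Q$ itself.

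First, decompose the difference. Since both $\pi_\theta(\cdot|s)$ and $\pi^*(\cdot|s)$ are normalized, $\sum_a(\pi_\theta(a|s)-\pi^*(a|s))=0$. Hence
\begin{align*}
L_{\pi_0}^{\pi_\theta}(s) - L_{\pi_0}^{\pi^*}(s) = \sum_a (\pi_\theta(a|s)-\pi^*(a|s))\,Q_{\pi_0}(s,a) = \sum_a (\pi_\theta(a|s)-\pi^*(a|s))\,A_{\pi_0}(s,a).
\end{align*}
The second equality subtracts $V_{\pi_0}(s)\sum_a(\pi_\theta-\pi^*)=0$. Normalization of $\pi^*$ is guaranteed by the choice of the soft-median $\mu_{\pi_0}(s)$ in Theorem~\ref{thr:conservative}, so this step is legitimate for the optimal policy.

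Second, bound the difference. Using $x\ge -|x|$ termwise,
\begin{align*}
\sum_a (\pi_\theta(a|s)-\pi^*(a|s))A_{\pi_0}(s,a) \ge -\sum_a |(\pi_\theta(a|s)-\pi^*(a|s))A_{\pi_0}(s,a)| = -D_\theta^{ATV}(s).
\end{align*}
Rewriting the sum as $\mathbb{E}_{a\sim\pi_0}$ of the ratio difference times $|A_{\pi_0}|$ shows this agrees with the definition in~\eqref{eq:true_loss}.

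Third, plug in Lemma~\ref{lem:l}. It gives $L_{\pi_0}^{\pi^*}(s)=V_{\pi_0}(s)+\epsilon\,\mathbb{E}_{\pi_0}[\tanh(\tilde{A}_{\pi_0}/2\lambda)\tilde{A}_{\pi_0}]$, and substituting this yields the claimed inequality. For continuous $\mathcal{A}$ the sums become integrals with the same argument.

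I do not expect a real obstacle here. The only point needing care is the first step: one must center by $V_{\pi_0}(s)$ rather than by $\mu_{\pi_0}(s)$, because the definition of $D_\theta^{ATV}$ weights by the mean advantage $A_{\pi_0}$, not by $\tilde A_{\pi_0}$. Either centering is valid because both policies are normalized.
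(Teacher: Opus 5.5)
Your proposal is correct and follows essentially the same route as the paper. The paper also uses the normalization of $\pi_\theta$ and $\pi^*$ to subtract $V_{\pi_0}(s)$, turning $L_{\pi_0}^{\pi^*}(s)-L_{\pi_0}^{\pi_\theta}(s)$ into an advantage-weighted difference; it bounds this by $D_\theta^{ATV}(s)$ via the triangle inequality and then substitutes the exact value of $L_{\pi_0}^{\pi^*}(s)$ from Lemma~\ref{lem:l}, the only cosmetic difference being that you apply the one-sided bound $x\ge -|x|$ directly to the sum over actions, whereas the paper bounds the absolute difference written as an expectation over $\pi_0$ of the ratio difference.
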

\begin{proof}
    We first bound $|L_{\pi_0}^{\pi^*}(s) - L_{\pi_0}^{\pi_\theta}(s)|$ with $D_{\theta}^{ATV}(s)$
    \begin{align*}
        |L_{\pi_0}^{\pi^*}(s) - L_{\pi_0}^{\pi_\theta}(s)| = &|\mathbb{E}_{a\sim \pi^*(\cdot|s)}\left[Q_{\pi_0}(s,a)\right] -\mathbb{E}_{a\sim \pi_\theta(\cdot|s)}\left[Q_{\pi_0}(s,a)\right]| \\
        = &|\mathbb{E}_{a\sim \pi^*(\cdot|s)}\left[Q_{\pi_0}(s,a)\right] - V_{\pi_0}(s) -(\mathbb{E}_{a\sim \pi_\theta(\cdot|s)}\left[Q_{\pi_0}(s,a)\right] - V_{\pi_0}(s))| \\
        = &|\mathbb{E}_{a\sim \pi^*(\cdot|s)}\left[A_{\pi_0}(s,a)\right] -\mathbb{E}_{a\sim \pi_\theta(\cdot|s)}\left[A_{\pi_0}(s,a)\right]| \\
        =&\left|\mathbb{E}_{a\sim \pi_0(\cdot|s)}\left[\left(\frac{\pi^*(a|s)}{\pi_0(a|s)}-\frac{\pi_\theta(a|s)}{\pi_0(a|s)}\right)A_{\pi_0}(s,a)\right]\right|\\
        \leq &\mathbb{E}_{a\sim \pi_0(\cdot|s)}\left[\left|\left(\frac{\pi^*(a|s)}{\pi_0(a|s)}-\frac{\pi_\theta(a|s)}{\pi_0(a|s)}\right)A_{\pi_0}(s,a)\right|\right]\\
        =&D_{\theta}^{ATV}(s)
    \end{align*}
    Then we have from Lemma~\ref{lem:l}
    \begin{align*}
        L_{\pi_0}^{\pi_\theta}(s) \geq L_{\pi_0}^{\pi^*}(s) - D_{\theta}^P(s) = V_{\pi_0}(s) + \epsilon\mathbb{E}_{a\sim \pi_0(\cdot|s)}\left[\tanh\left(\frac{\Tilde{A}_{\pi_0}(s, a)}{2\lambda}\right)\Tilde{A}_{\pi_0}(s, a)\right] - D_{\theta}^{ATV}(s)
    \end{align*}
\end{proof}

We now prove the Corollary~\ref{coro:soft_guarantee_true_l}.
\begin{proof}
    Similar to the proof of~\ref{thr:soft_guarantee}, we start by defining $D_\theta^{ATV}\in \mathbb{R}^{|\mathcal{S}|}$ with each component as $D_\theta^{ATV}(s)$.
    Then we can express Lemma~\ref{lem:l_theta} in the full state space as 
    $L_{\pi_0}^{\pi_\theta} \succeq V_{\pi_0} + \epsilon B_1 - D_\theta^{ATV}$, where $\succeq$ denotes elementwise $\geq$.
    Following similar steps of the proof of Theorem~\ref{thr:soft_guarantee}, we obtain 
    \begin{equation}
        r_{\pi_\theta} \succeq (I - \gamma P_{\pi_\theta})V_{\pi_0} + \epsilon B_1 - D_\theta^{ATV}. \label{ineq:vec_r}
    \end{equation}
    Multiplying both sides by $(I - \gamma P_{\pi_\theta})^{-1}$ from left gives
    \begin{equation}
        V_{\pi_\theta} \succeq V_{\pi_0} + (I - \gamma P_{\pi_\theta})^{-1}(\epsilon B_1 - D_\theta^{ATV}), \label{eq:vec_bound}
    \end{equation}
    because all terms of $(I - \gamma P_{\pi_\theta})^{-1}$ are positive. 
    Multiplying both sides by $d^\top$ finishes the proof.
\end{proof}

\subsection{Proof of Corollary~\ref{coro:loss_tv}}
\label{sec:proof_coro_loss_tv}
\begin{proof}
    From the inequality~\eqref{eq:vec_bound}, we can obtain 
    \begin{equation}
\begin{split}
    V_{\pi_\theta} \succeq &V_{\pi_0} + (I - \gamma P_{\pi_\theta})^{-1}(\epsilon B_1 - D_\theta^{ATV}) =V_{\pi_0} + \epsilon(I - \gamma P_{\pi_\theta})^{-1}B_1 - (I - \gamma P_{\pi_\theta})^{-1} D_\theta^{ATV} \\
    =&V_{\pi_0} + \epsilon(I - \gamma P_{\pi_\theta})^{-1}B_1 - (I - \gamma P_{\pi_0})^{-1} D_\theta^{ATV} - ((I - \gamma P_{\pi_\theta})^{-1} - (I - \gamma P_{\pi_0})^{-1})D_\theta^{ATV} \\
    =&V_{\pi_0} + \epsilon(I - \gamma P_{\pi_\theta})^{-1}B_1 - (I - \gamma P_{\pi_0})^{-1} D_\theta^{ATV} \\-& \underbrace{\gamma(I - \gamma P_{\pi_0})^{-1}\underbrace{(P_{\pi_\theta} - P_{\pi_0})\underbrace{(I - \gamma P_{\pi_\theta})^{-1}D_\theta^{ATV}}_{:=X}}_{:=Y} }_{:=Z}
\end{split} \label{ineq:longest}
\end{equation}
We can first bound $X$ elementwise by 
\begin{equation*}
    X \preceq (I - \gamma P_{\pi_\theta})^{-1} D^{ATV}_{\max} = \mathbf{1} \cdot \frac{D^{ATV}_{\max}}{1-\gamma},
\end{equation*}
where $\preceq$ denotes elementwise smaller or equal to.
Then we can bound each term $Y$ by
\begin{align*}
    |Y(s)| = &|\sum_a (\pi_\theta(a|s) - \pi_0(a|s))\sum_{s'}p(s'|s, a) X(s')| \\
    \leq &\sum_a |\pi_\theta(a|s) - \pi_0(a|s)|\sum_{s'}p(s'|s, a) \max_{s'}|X(s')| \\
    =&\sum_a |\pi_\theta(a|s) - \pi_0(a|s)| \max_{s'}|X(s')| \\
    \leq & (\underbrace{\sum_a |\pi_0(a|s) - \pi^*(a|s)|}_{\mathbb{E}_{a\sim \pi_0(\cdot|s)}[|\frac{\pi^*(a|s)}{\pi_0(a|s)}-1|]} + \sum_a |\pi_\theta(a|s) - \pi^*(a|s)|) \max_{s'}|X(s')|\\
    \leq &(\epsilon + D^{TV}_\theta(s) )\cdot \frac{D^{ATV}_{\max}}{1-\gamma} 
\end{align*}
Finally, $Z$ satisfies
\begin{equation}
    |Z| =  |\gamma (I-\gamma P_{\pi_0})^{-1}Y| \preceq \gamma(I-\gamma P_{\pi_0})^{-1}(D_\theta^{TV} + \epsilon \cdot \mathbf{1}) \cdot \frac{D^{ATV}_{\max}}{1-\gamma} \label{ineq:z}
\end{equation}
And 
\begin{equation}
\begin{split}
    |d^\top Z| & \leq  \gamma d^\top(I-\gamma P_{\pi_0})^{-1} (D_\theta^{TV} + \epsilon \cdot \mathbf{1}) \cdot \frac{D^{ATV}_{\max}}{1-\gamma} \\
    & = \frac{\gamma D^{ATV}_{\max}}{1-\gamma}\mathbb{E}_{s\sim d_{\pi_0}}[D_\theta^{TV}(s)] + \frac{\gamma \epsilon D^{ATV}_{\max}}{(1-\gamma)^2} \label{ineq:dz}
\end{split}
\end{equation}
Now let us consider the second term of~\eqref{ineq:longest} as
\begin{align*}
    \epsilon(I - \gamma P_{\pi_\theta})^{-1}B_1 =  \epsilon(I - \gamma P_{\pi^*})^{-1}B_1 - \epsilon ((I - \gamma P_{\pi^*})^{-1}-(I - \gamma P_{\pi_\theta})^{-1})B_1.
\end{align*}
With similar techniques for bounding $X$, $Y$ and $Z$ from~\eqref{ineq:longest} to~\eqref{ineq:z}, we have
\begin{align*}
    &|d^\top ((I - \gamma P_{\pi^*})^{-1}-(I - \gamma P_{\pi_\theta})^{-1})B_1| \leq \gamma d^\top (I - \gamma P_{\pi^*})^{-1} D_\theta^{TV} \cdot \frac{\Tilde{\delta}}{1-\gamma} \\
    &\leq \gamma d^\top (I - \gamma P_{\pi^*})^{-1} \mathbf{1}\cdot D_{\max}^{TV} \cdot \frac{\Tilde{\delta}}{1-\gamma} = \frac{\gamma \Tilde{\delta} D_{\max}^{TV}}{(1-\gamma)^2}
\end{align*}
Combining this with~\eqref{ineq:dz} and~\eqref{ineq:longest} finishes the proof.
\end{proof}
The results for continuous spaces can be extended by using linear operators other than matrices.

\subsection{Proof of Proposition~\ref{prop:ppo_equ}}
\label{app:proof_ppo_equ}
We use $A_0$ to abbreviate $A_{\pi_0}$. When $A_0>0$, the negative PPO objective for a fixed state-action pair can be further expressed as
\begin{align*}
&l^{PPO}(\rho) =
\begin{cases}
-\rho A_0, & \rho \le 1+\epsilon \\[4pt]
-(1+\epsilon) A_0, & \rho > 1+\epsilon
\end{cases}  
\\\quad\Rightarrow\quad &l'(\rho) :=l^{PPO}(\rho) + (1+\epsilon)A_0=
\begin{cases}
[(1+\epsilon) - \rho] A_0, & \rho \le 1+\epsilon \\[4pt]
0, & \rho > 1+\epsilon
\end{cases}\quad,
\end{align*}
where $l'$ is constructed from adding a constant $(1+\epsilon)A_0$ to $l^{PPO}$.
Similarly, we construct $l'$ for $A_0\leq0$:
\begin{align*}
&l^{PPO}(\rho) =
\begin{cases}
-\rho A_0, & \rho \ge 1-\epsilon \\[4pt]
-(1-\epsilon) A_0, & \rho < 1-\epsilon
\end{cases}  
\quad \\\Rightarrow\quad  &l'(\rho) := l^{PPO}(\rho) + (1-\epsilon)A_0=
\begin{cases}
[(1-\epsilon) - \rho] A_0, & \rho \ge 1-\epsilon \\[4pt]
0, & \rho < 1-\epsilon
\end{cases}\quad,
\end{align*}
We can rearrange $l'$ as
\begin{equation*}
\begin{split}
        &l'(\rho ):=\begin{cases}|A_{0}|\cdot |\rho - (1+\epsilon\cdot\text{sign}(A_{0}) )|, & [\rho - (1+\epsilon\cdot\text{sign}(A_{0}))]\cdot A_{0} \leq 0, \\[2pt]
0, & \text{Otherwise}.
\end{cases}
\end{split}
\end{equation*}
Then we have
\begin{align*}
    \mathbb{E}_{s\sim d_{\pi_0},a\sim \pi_0(\cdot|s)}[l'(\rho)] &= \mathbb{E}_{s\sim d_{\pi_0},a\sim \pi_0(\cdot|s)}[l^{PPO}(\rho)] \\ 
    &+ \mathbb{E}_{s\sim d_{\pi_0},a\sim \pi_0(\cdot|s)}[(1+\epsilon)\mathbb{I}(A_0>0)A_0+(1-\epsilon)\mathbb{I}(A_0\leq0)A_0],
\end{align*}
where the second term is a constant independent of $\rho$.
Therefore, $l'$ is an equivalent loss function to $l^{PPO}$ for solving the optimal $\rho$.

\subsection{Proof of Corollary~\ref{coro:unbalance}}
Similar to Section~\ref{sec:proof_policy}, for each fixed $s$, the Lagrangian of problem~\eqref{prob:unbalanced} can be written as
\begin{align*}
    \mathcal{L}(\rho)=&\mathbb{E}_{a\sim{\pi_0(\cdot|s)}}\left[\rho Q_{\pi_0}(s, a) - \lambda(\rho-c_l)  \log (\rho-c_l)   -\lambda (c_h - \rho  )\log(c_h - \rho ) - \lambda \log \frac{c_h-1}{1-c_l} \rho\right] \\- &\mu(s) (\mathbb{E}_{a\sim\pi_0(\cdot|s)}[\rho]-1)
\end{align*}
For continuous MDP, applying $\frac{\partial L}{\partial\rho}=0$ gives
\begin{align*}
    &Q_{\pi_0}(s, a) - \lambda \log (\rho - c_l) + \lambda \log (c_h - \rho) - \lambda \log \frac{c_h-1}{1-c_l} - \mu(s) = 0 \\
    \Leftrightarrow \quad & Q_{\pi_0}(s, a) - \mu(s) = \lambda \left(\log \frac{\rho - c_l}{c_h - \rho} + \log \frac{c_h-1}{1-c_l}\right) \\
    \Leftrightarrow \quad & \frac{\rho - c_l}{c_h - \rho} = \frac{1 - c_l}{c_h - 1}\exp{\frac{Q_{\pi_0}(s, a) - \mu(s)}{\lambda}} \\
    \Leftrightarrow \quad & \rho = \frac{c_l + c_h\cdot\frac{1 - c_l}{c_h - 1}\exp{\frac{Q_{\pi_0}(s, a) - \mu(s)}{\lambda}} }{1 + \frac{1 - c_l}{c_h - 1}\exp{\frac{Q_{\pi_0}(s, a) - \mu(s)}{\lambda}}} = c_l + \frac{c_h - c_l}{1 + \frac{c_h - 1}{1 - c_l}\exp{\frac{\mu(s) - Q_{\pi_0}(s, a)}{\lambda}}}
\end{align*}
Now we derive the conditions that need to be satisfied by $\mu'_{\pi_0}(s)$.
Specifically, having $\rho$ normalized gives
\begin{align*}
    &\mathbb{E}_{a\sim \pi_0(\cdot|s)}\left[c_l + \frac{c_h - c_l}{1 + \frac{c_h - 1}{1 - c_l}\exp{\frac{\mu(s) - Q_{\pi_0}(s, a)}{\lambda}}}\right] = 1 \\\,\,\Leftrightarrow \,\,&\mathbb{E}_{a\sim \pi_0(\cdot|s)}\left[\frac{c_h - c_l}{1 + \frac{c_h - 1}{1 - c_l}\exp{\frac{\mu(s) - Q_{\pi_0}(s, a)}{\lambda}}}\right] = 1-c_l \\
    \,\,\Leftrightarrow \,\,& \mathbb{E}_{a\sim \pi_0(\cdot|s)}\left[\frac{c_h - c_l - (1-c_l) - (c_h-1)\exp{\frac{\mu(s) - Q_{\pi_0}(s, a)}{\lambda}}}{1 + \frac{c_h - 1}{1 - c_l}\exp{\frac{\mu(s) - Q_{\pi_0}(s, a)}{\lambda}}}\right] = 0 \\
    \,\,\Leftrightarrow \,\,& \mathbb{E}_{a\sim \pi_0(\cdot|s)}\left[\frac{1 - \exp{\frac{\mu(s) - Q_{\pi_0}(s, a)}{\lambda}}}{1 + \frac{c_h - 1}{1 - c_l}\exp{\frac{\mu(s) - Q_{\pi_0}(s, a)}{\lambda}}}\right] = 0
\end{align*}
On the other hand, $\frac{\partial g'}{\partial x}=0$ from $g'$ defined in Corollary~\ref{coro:unbalance} gives
\begin{align*}
   &\mathbb{E}_{a\sim\pi_0(\cdot|s)} \left[\frac{\frac{c_h-1}{c_h-c_l}e^{\frac{c_h-1}{c_h-c_l}x} - \frac{c_h-1}{1-c_l}\cdot\frac{1-c_l}{c_h-c_l}e^{-\frac{1-c_l}{c_h-c_l}x}}{e^{\frac{c_h-1}{c_h-c_l}x} + \frac{c_h-1}{1-c_l}e^{-\frac{1-c_l}{c_h-c_l}x}}\right] = 0 \\
   \Leftrightarrow \quad &\mathbb{E}_{a\sim\pi_0(\cdot|s)} \left[\frac{e^{\frac{c_h-1}{c_h-c_l}x} - e^{-\frac{1-c_l}{c_h-c_l}x}}{e^{\frac{c_h-1}{c_h-c_l}x} + \frac{c_h-1}{1-c_l}e^{-\frac{1-c_l}{c_h-c_l}x}}\right] = 0 \\
   \Leftrightarrow \quad &\mathbb{E}_{a\sim\pi_0(\cdot|s)} \left[\frac{1 - e^{-x}}{1 + \frac{c_h-1}{1-c_l}e^{-x}}\right] = 0,
\end{align*}
where assigning $x=\frac{\tilde{A}'_{\pi_0}}{\lambda}$ recovers the normalization condition above.
For a discrete MDP, similar to the proof of Theorem~\ref{thr:soft_guarantee}, we can obtain the same Lagrangian as the continuous case.

\subsection{Monotonic Guarantees for Asymmetric Bounded Ratio RL}
\label{app:proof_asy_guarantee}
\begin{corollary}[Asymmetric monotonic performance guarantee]
    The optimal policies in Theorem~\ref{coro:unbalance} satisfy
    \begin{align*}
        \eta(\pi^*) =\,& \,\eta(\pi_0) + (c_h - 1) \mathbb{E}_{s\sim d_{\pi^*}, a\sim \pi_0(\cdot|s)}\left[\frac{1 + e^{-\tilde{A}'_{\pi_0}/{\lambda}}}{1 + \frac{c_h-1}{1-c_l}e^{-\tilde{A}'_{\pi_0}/{\lambda}}}\cdot \tanh\left(\frac{\tilde{A}'_{\pi_0}}{2\lambda}\right)\tilde{A}'_{\pi_0}\right]\\
        =:\,&\eta(\pi_0) + (c_h - 1) B',
    \end{align*}
    where $\tilde{A}_{\pi_0}$ abbreviates $\tilde{A}_{\pi_0}(s,a)$, $B'$ is a non-negative constant given fixed $\pi_0$.
     \label{coro:unbalance_soft_guarantee}
\end{corollary}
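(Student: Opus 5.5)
The plan mirrors the symmetric case: prove a per-state analogue of Lemma~\ref{lem:l}, then run the matrix argument from the proof of Theorem~\ref{thr:soft_guarantee} unchanged. Fix $s$ and write $\rho^*(a):=\pi^*(a|s)/\pi_0(a|s)$. Let $x:=\tilde{A}'_{\pi_0}(s,a)/\lambda$ and $k:=\frac{c_h-1}{1-c_l}>0$. By Corollary~\ref{coro:unbalance},
\begin{equation*}
\rho^*=c_l+\frac{c_h-c_l}{1+k e^{-x}},
\end{equation*}
and $\mu'_{\pi_0}(s)$ is chosen so that $\mathbb{E}_{a\sim\pi_0(\cdot|s)}[\rho^*]=1$. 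This is exactly the normalization condition derived in the proof of Corollary~\ref{coro:unbalance}, and $\mu'$ exists because the expectation is continuous and monotone in $\mu$, with range $(c_l,c_h)\ni 1$.

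\textbf{Per-state step.} I would compute
\begin{equation*}
L^{\pi^*}_{\pi_0}(s):=\mathbb{E}_{a\sim\pi^*(\cdot|s)}[Q_{\pi_0}(s,a)]=V_{\pi_0}(s)+\mathbb{E}_{\pi_0}\big[(\rho^*-1)Q_{\pi_0}(s,a)\big].
\end{equation*}
Since $\mathbb{E}_{\pi_0}[\rho^*-1]=0$, we may replace $Q_{\pi_0}$ by $Q_{\pi_0}-\mu'_{\pi_0}(s)=\tilde{A}'_{\pi_0}$, just as in Lemma~\ref{lem:l}. Next I would simplify $\rho^*-1$. Putting everything over $1+ke^{-x}$ gives numerator
\begin{equation*}
(c_l-1)(1+ke^{-x})+c_h-c_l=(c_h-1)-(1-c_l)k e^{-x}=(c_h-1)(1-e^{-x}),
\end{equation*}
so $\rho^*-1=(c_h-1)\frac{1-e^{-x}}{1+ke^{-x}}$. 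Then, using $1-e^{-x}=(1+e^{-x})\tanh(x/2)$, we get
\begin{equation*}
\rho^*-1=(c_h-1)\,\frac{1+e^{-x}}{1+ke^{-x}}\tanh\!\Big(\frac{\tilde{A}'_{\pi_0}}{2\lambda}\Big).
\end{equation*}
Hence $L^{\pi^*}_{\pi_0}(s)=V_{\pi_0}(s)+(c_h-1)B'_1(s)$, where $B'_1(s)$ is the $\pi_0$-expectation of the integrand in the statement.

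\textbf{Global step.} In vector form this reads $r_{\pi^*}+\gamma P_{\pi^*}V_{\pi_0}=V_{\pi_0}+(c_h-1)B'_1$. Exactly as in the proof of Theorem~\ref{thr:soft_guarantee}, this gives
\begin{equation*}
V_{\pi^*}=V_{\pi_0}+(c_h-1)(I-\gamma P_{\pi^*})^{-1}B'_1.
\end{equation*}
Left-multiplying by $d^\top$ yields $\eta(\pi^*)=\eta(\pi_0)+(c_h-1)\mathbb{E}_{s\sim d_{\pi^*}}[B'_1(s)]$, which is the claimed identity.

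\textbf{Nonnegativity.} The factor $\frac{1+e^{-x}}{1+ke^{-x}}$ is positive, and $\tanh(\tilde{A}'/2\lambda)\tilde{A}'\ge 0$ because the two factors share a sign. So $B'\ge 0$, and $c_h-1>0$.

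The only real obstacle is the algebraic identification of $\rho^*-1$ with the stated $\tanh$ form. Everything else is a verbatim reuse of the earlier proofs.
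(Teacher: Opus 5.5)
Your proposal is correct and follows the paper's proof essentially step for step. Both use the per-state identity $L^{\pi^*}_{\pi_0}(s)=V_{\pi_0}(s)+\mathbb{E}_{\pi_0}[(\rho^*-1)\tilde{A}'_{\pi_0}]$, obtained from the normalization of $\mu'_{\pi_0}$, and the same rewriting of $\rho^*-1$ as $(c_h-1)\frac{1+e^{-x}}{1+ke^{-x}}\tanh(x/2)$, followed by the matrix argument of Theorem~\ref{thr:soft_guarantee} and the same sign argument for $B'\ge 0$. Your justification that $\mu'_{\pi_0}(s)$ exists (continuity and monotonicity in $\mu$, with range $(c_l,c_h)$) is a small addition that the paper only spells out for the symmetric case.
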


\begin{proof}
    We start by deriving $L^{\pi^*}_{\pi_0}(s)$, similar to Lemma~\ref{lem:l}.
    \begin{equation}
    \begin{split}
        L^{\pi^*}_{\pi_0}(s) = &\,\mathbb{E}_{a\sim \pi_0(\cdot|s)}\left[\frac{\pi^*(a|s)}{\pi_0(a|s)} Q_{\pi_0}(s, a)\right] \\
        = & \,V_{\pi_0}(s) + \mathbb{E}_{a\sim \pi_0(\cdot|s)}\left[\left(\frac{\pi^*(a|s)}{\pi_0(a|s)} -1\right)Q_{\pi_0}(s, a)\right] \\
        = & \,V_{\pi_0}(s) + \mu_{\pi_0}(s)\cdot\underbrace{\mathbb{E}_{a\sim \pi_0(\cdot|s)}\left[\left(\frac{\pi^*(a|s)}{\pi_0(a|s)} -1\right)\right]}_{=0}\\
        + &\,\mathbb{E}_{a\sim \pi_0(\cdot|s)}\left[\left(\frac{\pi^*(a|s)}{\pi_0(a|s)} -1\right)\tilde{A}'_{\pi_0}(s, a)\right]
    \end{split}\label{eq:s_unbalance}
    \end{equation}
where $\mathbb{E}_{a\sim \pi_0(\cdot|s)}\left[\left(\frac{\pi^*(a|s)}{\pi_0(a|s)} -1\right)\right]=0$ is because of the normalization constraints enforced by the definition of $\mu'_{\pi_0}(s)$.
We now compute $\frac{\pi^*(a|s)}{\pi_0(a|s)} -1$ from Corollary~\ref{coro:unbalance}:
\begin{align*}
    \frac{\pi^*(a|s)}{\pi_0(a|s)} -1 =& c_l + \frac{c_h - c_l}{1 + \frac{c_h-1}{1-c_l}\exp(-\tilde{A}'_{\pi_0}/{\lambda})} - 1 \\
    = & \frac{c_h - c_l + c_l-1 - (c_h-1)\exp(-\tilde{A}'_{\pi_0}/{\lambda})}{1 + \frac{c_h-1}{1-c_l}\exp(-\tilde{A}'_{\pi_0}/{\lambda})} \\
    = & (c_h - 1)\cdot \frac{1 - \exp(-\tilde{A}'_{\pi_0}/{\lambda})}{1 + \frac{c_h-1}{1-c_l}\exp(-\tilde{A}'_{\pi_0}/{\lambda})} \\
    = & (c_h - 1)\cdot \frac{1 + \exp(-\tilde{A}'_{\pi_0}/{\lambda})}{1 + \frac{c_h-1}{1-c_l}\exp(-\tilde{A}'_{\pi_0}/{\lambda})} \cdot \frac{1 - \exp(-\tilde{A}'_{\pi_0}/{\lambda})}{1 + \exp(-\tilde{A}'_{\pi_0}/{\lambda})} \\
    = & (c_h - 1)\cdot \frac{1 + \exp(-\tilde{A}'_{\pi_0}/{\lambda})}{1 + \frac{c_h-1}{1-c_l}\exp(-\tilde{A}'_{\pi_0}/{\lambda})} \cdot \tanh\left(\frac{\tilde{A}'_{\pi_0}}{2\lambda}\right)
\end{align*}
Apply this into~\eqref{eq:s_unbalance} gives
\begin{align*}
    L^{\pi^*}_{\pi_0}(s) =  V_{\pi_0}(s) + (c_h - 1)\cdot \mathbb{E}_{a\sim \pi_0(\cdot|s)}\left[\frac{1 + \exp(-\tilde{A}'_{\pi_0}/{\lambda})}{1 + \frac{c_h-1}{1-c_l}\exp(-\tilde{A}'_{\pi_0}/{\lambda})} \cdot \tanh\left(\frac{\tilde{A}'_{\pi_0}}{2\lambda}\right)\tilde{A}'_{\pi_0}\right]
\end{align*}

Following the same steps of proof of Theorem~\ref{thr:soft_guarantee} in Appendix~\ref{sec:proof_mono}, we can apply expectations over states and finish the proof.
Notably, since $\frac{1 + e^{-\tilde{A}'_{\pi_0}/{\lambda}}}{1 + \frac{c_h-1}{1-c_l}e^{-\tilde{A}'_{\pi_0}/{\lambda}}}$ is always positive, $\frac{\pi^*(a|s)}{\pi_0(a|s)} -1$ still has the same sign as $\tilde{A}'_{\pi_0}$, which makes $B'$ non-negative.
\end{proof}

With Corollary~\ref{coro:unbalance_soft_guarantee}, one can also derive corresponding corollaries for asymmetric BPO like Corollary~\ref{coro:soft_guarantee_true_l} and~\ref{coro:loss_tv}.

\subsection{Hyperparameters}
\label{sec:hyper}
The tuned hyperparameters for BPO and PPO in RL environments are summarized in Table~\ref{tab:hyper} and Table~\ref{tab:hyper_ppo}.
For the GBPO implementation, we set the discount factor $\gamma = 1$ and utilize a mini-batch size of 1. 
Other hyperparameters are the same as those in the original training scripts provided by TTRL~\cite{zuo2025ttrl}. All the LLM finetuning experiments are conducted on 4 x NVIDIA H100 GPUs.
We set the group size to 32 and the maximum sequence length to 4,096 tokens.

\begin{table}[!hbtp]
\caption{Hyperparameters of BPO for benchmarking environments.
We take $\lambda=0.001$, $\alpha_1=0$ and $w_1=w_2=0.5$ across all environments.
The GAE-$\lambda$ is set to 0.95 for all environments except 0.98 for Swimmer.
ADP abbreviates adaptive learning rates based on the KL-divergence, according to~\cite{schwarke2025rsl}.
}
\footnotesize
\centering
\begin{tabular}{ccccccccc}
\hline
Envs           & batch size           & clip & ent\_coef & gamma                & lr        & n\_epochs & n\_steps             & n\_envs              \\ \hline
Atari          & 256                  & 0.3        & 0.001              & 0.98                 & $2.5e^{-4}$              & 5         & 128                  & 8                    \\
Ant-v4         & 256                  & 0.3        & 0                  & 0.99                 & $1e^{-4}$               & 10        & 2048                 & 1                    \\
Humanoid-v4    & 128                  & 0.2        & {0}            & 0.99                 & $1e^{-4}$               & 5         & 512                  & 1                    \\
Hopper-v4      & 32                   & 0.25       & 0                  & 0.999                & $9.808e^{-5}$           & 10        & 512                  & 4                    \\
Swimmer-v4     & 256                  & 0.1        & 0                  & 0.9999               & $3e^{-4}$               & 10        & 1024                 & 4                    \\
Go1-Rough & 24576                & 0.3        & 0.001                  & 0.99                 & ADP          & 10         & 24                   & 4096                 \\
Anymal-C & 24576                & 0.25        & 0                  & 0.99                 & ADP          & 5         & 24                   & 4096                 \\
G1-Rough       & 24576                & 0.2        & 0                  & 0.99                 & ADP          & 10        & 24                   & 4096                 \\
H1-Rough       & 24576                & 0.2        & 0                  & 0.99                 & ADP          & 5         & 24                   & 4096                 \\ \hline
\end{tabular}
\label{tab:hyper}
\end{table}

\begin{table}[!hbtp]
\caption{Hyperparameters of PPO for benchmarking environments, based on RL-Zoo~\cite{rl-zoo3}.
The GAE-$\lambda$ is set to 0.95 for all environments except 0.98 for Swimmer, 0.99 for Hopper, 0.9 for Humanoid, and 0.8 for Ant.
ADP abbreviates adaptive learning rates based on the KL-divergence, according to~\cite{schwarke2025rsl}.
}
\footnotesize
\centering
\begin{tabular}{ccccccccc}
\hline
Envs           & batch size           & clip & ent\_coef & gamma                & lr        & n\_epochs & n\_steps             & n\_envs              \\ \hline
Atari          & 256                  & 0.2        & 0.01              & 0.98                 & $2.5e^{-4}$              & 4         & 128                  & 8                    \\
Ant-v4         & 32                  & 0.1        & $4.96e^{-7}$                  & 0.98                 & $1.9e^{-5}$               & 10        & 512                 & 1                    \\
Humanoid-v4    & 256                  & 0.3        & 0.00238            & 0.98                 & $3.57e^{-5}$               & 5         & 512                  & 1                    \\
Hopper-v4      & 32                   & 0.25       & 0                  & 0.999                & $9.808e^{-5}$           & 10        & 512                  & 4                    \\
Swimmer-v4     & 256                  & 0.1        & 0                  & 0.9999               & $6e^{-4}$               & 10        & 1024                 & 4                    \\
Go1-Rough & 24576                & 0.2        & 0.01                  & 0.99                 & ADP          & 5         & 24                   & 4096                 \\
Anymal-C & 24576                & 0.2        & 0.005                  & 0.99                 & ADP          & 5         & 24                   & 4096                 \\
G1-Rough       & 24576                & 0.2        & 0.008                  & 0.99                 & ADP          & 5        & 24                   & 4096                 \\
H1-Rough       & 24576                & 0.2        & 0.01                 & 0.99                 & ADP          & 5         & 24                   & 4096                 \\ \hline
\end{tabular}
\label{tab:hyper_ppo}
\end{table}

\textbf{Empirical Tuning Observations:} 
While BPO can generally be initialized using hyperparameters tuned for PPO, specific adjustments often yield superior performance. 
Empirically, we found that increasing the clip ratio by $0.1$ and doubling the number of epochs (e.g., from $5$ to $10$) can sometimes enhance stability and results. 
Furthermore, BPO exhibits a higher sensitivity to the entropy coefficient; in many environments, an entropy weight $10^{-1}$ smaller than the optimal PPO setting is sufficient to maintain adequate exploration without destabilizing the policy.

\end{document}